\newcommand{\name}{{\textrm{SACL}}}
\newtheorem{definition}{Definition}
\def\eqref#1{equation~\ref{#1}}
\def\1{\bm{1}}
\DeclareMathAlphabet{\mathsfit}{\encodingdefault}{\sfdefault}{m}{sl}
\SetMathAlphabet{\mathsfit}{bold}{\encodingdefault}{\sfdefault}{bx}{n}
\newcommand{\Var}{\mathrm{Var}}
\DeclareMathOperator*{\argmax}{arg\,max}
\title{Accelerate Multi-Agent Reinforcement Learning in Zero-Sum Games with Subgame Curriculum Learning}
\author {
    Jiayu Chen\textsuperscript{\rm 1}\equalcontrib,  
    Zelai Xu\textsuperscript{\rm 1}\equalcontrib,  
    Yunfei Li\textsuperscript{\rm 1},  
    Chao Yu\textsuperscript{\rm 1}, 
    \\
    Jiaming Song \textsuperscript{\rm 3},
    Huazhong Yang \textsuperscript{\rm 1},
    Fei Fang \textsuperscript{\rm 4},
    Yu Wang \textsuperscript{\rm 1$\dag$}, 
    Yi Wu  \textsuperscript{\rm 1, \rm 2}\thanks{Corresponding Authors.}
}
\begin{document}

\maketitle

\begin{abstract}
Learning Nash equilibrium (NE) in complex zero-sum games with multi-agent reinforcement learning (MARL) can be extremely computationally expensive. Curriculum learning is an effective way to accelerate learning, but an under-explored dimension for generating a curriculum is the difficulty-to-learn of the \emph{subgames} -- games induced by starting from a specific state. In this work, we present a novel subgame curriculum learning framework for zero-sum games.  
It adopts an adaptive initial state distribution by resetting agents to some previously visited states where they can quickly learn to improve performance.
Building upon this framework, we derive a subgame selection metric that approximates the squared distance to NE values and further adopt a particle-based state sampler for subgame generation. Integrating these techniques leads to our new algorithm, \emph{\underline{S}ubgame \underline{A}utomatic \underline{C}urriculum \underline{L}earning} ({\name}), which is a realization of the subgame curriculum learning framework. {\name} can be combined with any MARL algorithm such as MAPPO. Experiments in the
particle-world environment and Google Research Football environment show {\name} produces much stronger policies than baselines. In the challenging hide-and-seek quadrant environment, {\name} produces all four emergent stages and uses only half the samples of MAPPO with self-play. The project website is at \url{https://sites.google.com/view/sacl-rl}. 

\end{abstract}


\section{Introduction}\label{sec:intro}
Applying reinforcement learning (RL) to zero-sum games has led to enormous success, with trained agents defeating professional humans in Go~\citep{silver2016mastering}, StarCraft II~\citep{vinyals2019grandmaster}, and Dota 2~\citep{berner2019dota}.
To find an approximate Nash equilibrium (NE) in complex games, these works often require a tremendous amount of training resources including hundreds of GPUs and weeks or even months of time.
The unaffordable cost prevents RL from more real-world applications beyond these flagship projects supported by big companies and makes it important to develop algorithms that can learn close-to-equilibrium strategies in a substantially more efficient manner.

One way to accelerate training is curriculum learning -- training agents in tasks from easy to hard. Many existing works in solving zero-sum games with MARL generate a curriculum by choosing whom to play with. They often use self-play to provide a natural policy curriculum as the agents are trained against increasingly stronger opponents~\citep{bansal2018emergent,baker2020emergent}.
The self-play framework can be further extended to population-based training (PBT) by maintaining a policy pool and iteratively training new best responses to mixtures of previous policies~\citep{mcmahan2003planning,lanctot2017unified}. 
Such a policy-level curriculum generation paradigm is very different from the paradigm commonly used in goal-conditioned RL~\citep{matiisen2019teacher,portelas2020teacher}. Most curriculum learning methods for goal-conditioned problems directly reset the goal or initial states for each training episode to ensure the current task is of suitable difficulty for the learning agent. 
In contrast, the policy-level curriculum in zero-sum games only provides increasingly stronger opponents, and the agents are still trained by playing the full game starting from a fixed initial state distribution, which is often very challenging.

In this paper, we propose a general subgame curriculum learning framework to further accelerate MARL training for zero-sum games. It 
leverages ideas from goal-conditioned RL. 
Complementary to policy-level curriculum methods like self-play and PBT, our framework generates subgames (i.e., games induced by starting from a specific state) with growing difficulty for agents to learn and eventually solve the full game.
We provide justifications for our proposal by analyzing a simple iterated Rock-Paper-Scissors game. We show that in this game, vanilla MARL requires exponentially many samples to learn the NE. 
However, by using a buffer to store the visited states and choosing an adaptive order of state-induced subgames to learn, the NE can be learned with linear samples.

A key challenge in our framework is to choose which subgame to train on. 
This is non-trivial in zero-sum games since there does not exist a clear progression metric like the success rate in goal-conditioned problems. 
While the squared difference between the current state value and the NE value can measure the progress of learning, it is impossible to calculate this value during training as the NE is generally unknown.
We derive an alternative metric that approximates the squared difference with a bias term and a variance term.
The bias term measures how fast the state value changes and the variance term measures how uncertain the current value is.
We use the combination of the two terms as the sampling weights for states and prioritize subgames with fast change and high uncertainty. Instantiating our framework with the state selection metric and a non-parametric subgame sampler, we develop an automatic curriculum learning algorithm for zero-sum games, i.e., \emph{\underline{S}ubgame \underline{A}utomatic \underline{C}urriculum \underline{L}earning} ({\name}).
{\name} can adopt any MARL algorithm as its backbone and preserve the overall convergence property. 
In our implementation, we choose the MAPPO algorithm~\citep{yu2022mappo} for the best empirical performances.

We first evaluate {\name} in the Multi-Agent Particle Environment and Google Research Football, where {\name} learns stronger policies with lower exploitability than existing MARL algorithms for zero-sum games given the same amount of environment interactions.
We then stress-test the efficiency of {\name} in the challenging hide-and-seek environment.
{\name} leads to the emergence of all four phases of different strategies and uses 50\% fewer samples than MAPPO with self-play.

\section{Preliminary}\label{sec:prelim}
\subsection{Markov Game}
A Markov game~\citep{littman1994markov} is defined by a tuple $\mathcal{MG} = (\mathcal{N}, \mathcal{S}, \bm{\mathcal{A}}, P, \bm{R}, \gamma, \rho)$, where $\mathcal{N}=\{1, 2, \cdots, N\}$ is the set of agents, $\mathcal{S}$ is the state space,  $\bm{\mathcal{A}}=\Pi_{i=1}^N \mathcal{A}_i$ is the joint action space with $\mathcal{A}_i$ being the action space of agent $i$, $P:\mathcal{S} \times \bm{\mathcal{A}} \rightarrow \Delta(\mathcal{S})$ is the transition probability function, $\bm{R} = (R_1, R_2, \cdots, R_N):\mathcal{S} \times \bm{\mathcal{A}} \rightarrow \mathbb{R}^n$ is the joint reward function with $R_i$ being the reward function for agent $i$, $\gamma$ is the discount factor, and $\rho$ is the distribution of initial states.
Given the current state $s$ and the joint action $\bm{a}=(a_1, a_2, \cdots, a_N)$ of all agents, the game moves to the next state $s'$ with probability $P(s'|s, \bm{a})$ and agent $i$ receives a reward $R_i(s, \bm{a})$.

For infinite-horizon Markov games, a subgame $\mathcal{MG}(s)$ is defined as the Markov game induced by starting from state $s$, i.e., $\rho(s)=1$. 
Selecting subgames is therefore equivalent to setting the Markov game's initial states.
The subgames of finite-horizon Markov games are defined similarly and have an additional variable to denote the current step $t$.

We focus on two-player zero-sum Markov games, i.e., $N=2$ and $R_1(s, \bm{a}) + R_2(s, \bm{a}) = 0$ for all state-action pairs $(s, \bm{a}) \in \mathcal{S} \times \bm{\mathcal{A}}$.
We use the subscript $i$ to denote variables of player $i$ and the subscript $-i$ to denote variables of the player other than $i$.
Each player uses a policy $\pi_i:\mathcal{S} \rightarrow \mathcal{A}_i$ to produce actions and maximize its own accumulated reward.
Given the joint policy $\bm{\pi} = (\pi_1, \pi_2)$, each player's value function of state $s$ and Q-function of state-action pair $(s, \bm{a})$ are defined as 

\begin{align}
V^{\bm{\pi}}_i(s) &= \mathbb{E} \Big[ \sum_t \gamma^t R_i(s^t, \bm{a}^t) \Big| s^0=s \Big], \\
Q^{\bm{\pi}}_i(s, \bm{a}) &= \mathbb{E} \Big[ \sum_t \gamma^t R_i(s^t, \bm{a}^t) \Big| s^0=s, \bm{a}^0=\bm{a} \Big].
\end{align}

The solution concept of two-player zero-sum Markov games is Nash equilibrium (NE), a joint policy where no player can get a higher value by changing the policy alone.
\begin{definition}[NE]
A joint policy $\bm{\pi}^* = (\pi_1^*, \pi_2^*)$ is a Nash equilibrium of a Markov game if for all initial states $s^0$ with $\rho(s^0) > 0$, the following condition holds 
\begin{align}
    \pi_i^* = \argmax_{\pi_i} V_i^{(\pi_i, \pi_{-i}^*)}(s^0),\ \forall i \in \{1, 2\}.
\end{align}
\end{definition}
We use $V^*_i(\cdot)$ to denote the NE value function of player $i$ and $Q^*_i(\cdot, \cdot)$ to denote the NE Q-function of player $i$, and the following equations hold by definition and the minimax nature of zero-sum games.
\begin{align}
    V^*_i(s) &= \max_{\pi_i} \min_{\pi_{-i}} \mathbb{E}_{\bm{a} \sim \bm{\pi}(\cdot|s)} \left[Q^*_i(s, \bm{a})\right], \label{eq:v_ne} \\
    Q^*_i(s, \bm{a}) &= R_i(s, \bm{a}) + \gamma \cdot \mathbb{E}_{s' \sim P(\cdot|s, \bm{a})}\left[V^*_i(s')\right]. \label{eq:q_ne}
\end{align}

\subsection{MARL Algorithms in Zero-Sum Games}

MARL methods have been applied to zero-sum games tracing back to the TD-Gammon project~\citep{tesauro1995temporal}.
A large body of work~\citep{zinkevich2007regret,brown2019deep,steinberger2020dream,gruslys2020advantage} is based on regret minimization, and a well-known result is that the average of policies produced by self-play of regret-minimizing algorithms converges to the NE policy of zero-sum games~\citep{freund1996game}. 
Another notable line of work~\citep{littman1994markov,heinrich2015fictitious,lanctot2017unified,perolat2022mastering} combines RL algorithms with game-theoretic approaches.
These works typically use self-play or population-based training to collect samples and then apply RL methods like Q-learning~\citep{watkins1992q} and PPO~\citep{schulman2017proximal} to learn the NE value functions and policies, and have recently achieved great success~\citep{silver2016mastering,jaderberg2018human,vinyals2019grandmaster,berner2019dota}.

For the analysis in the next section, we introduce a classic MARL algorithm named minimax-Q learning~\citep{littman1994markov} that extends Q-learning to zero-sum games.
Initializing functions $Q_i(\cdot, \cdot)$ with zero values, minimax-Q uses an exploration policy induced by the current Q-functions to collect a batch of samples $\{(s^t, \bm{a}^t, r_i^t, s^{t+1})\}_{t=0}^T$ and uses these samples to update the Q-functions by
\begin{multline}
    Q_i(s^t, \bm{a}^t) \leftarrow (1 - \alpha) \cdot Q_i(s^t, \bm{a}^t) + \\
    \alpha \cdot \big(r_i^t + \gamma \cdot \max_{\pi_i} \min_{\pi_{-i}} \mathbb{E}_{\bm{a} \sim \bm{\pi}(\cdot|s)} \left[Q_i(s^{t+1}, \bm{a})\right]\big), \label{eq:minimax-q}
\end{multline}
where $\alpha$ is the learning rate.
This sample-and-update process continues until the Q-functions converge.
Under the assumptions that the state-action sets are discrete and finite and are visited an infinite number of times, it is proved that the stochastic updates by Eq.~(\ref{eq:minimax-q}) lead to the NE Q-functions~\citep{szepesvari1999unified}.

\section{A Motivating Example}\label{sec:example}

\begin{figure}[t]
    \centering
    \includegraphics[width=0.7\linewidth]{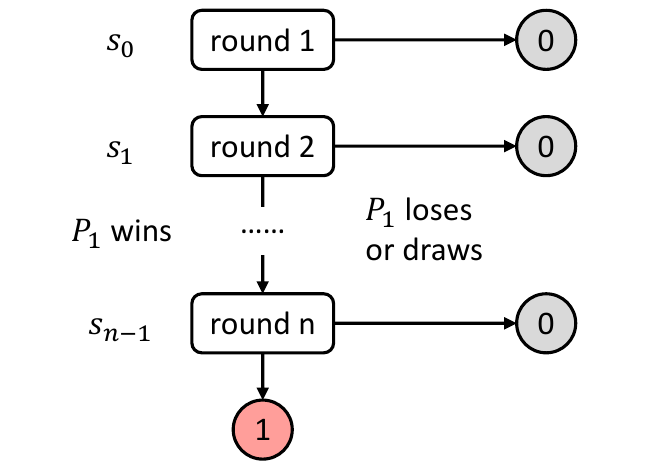}
    \vspace{-2mm}
    \caption{Illustration of the $RPS(n)$ game.}
    \label{fig:iterated_rps}
    \vspace{-4mm}
\end{figure}

\begin{figure}[t]
    \centering
    \includegraphics[width=0.7\linewidth]{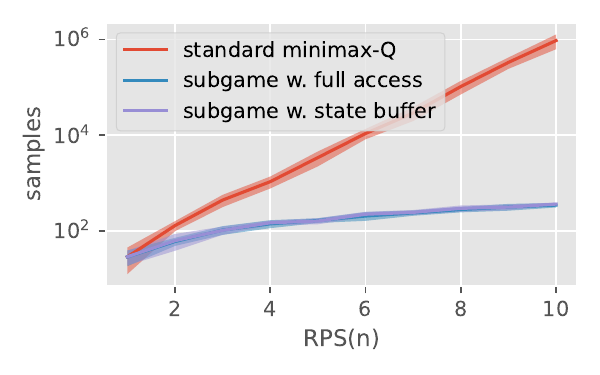}
    \vspace{-4mm}
    \caption{Number of samples used to learn the NE Q-values of $RPS(n)$ games.}
    \label{fig:iterated_rps_result}
    \vspace{-4mm}
\end{figure}

In this section, we show by a simple illustrative example that vanilla MARL methods like minimax-Q require exponentially many samples to derive the NE.
However, if we can dynamically set the initial state distribution and induce an appropriate order of subgames to learn, the sample complexity can be substantially reduced from exponential to linear. 
Such an observation motivates our proposed algorithm described in later sections.

\subsection{Iterated Rock-Paper-Scissors Game}

We introduce an iterated variant of the Rock-Paper-Scissor (RPS) game, denoted as $RPS(n)$.
As shown in Fig.~\ref{fig:iterated_rps}, $P_1$ and $P_2$ play the RPS game for up to $n$ rounds.
If $P_1$ wins all rounds, it gets a reward of $1$ and $P_2$ gets a reward of $-1$.
If $P_1$ loses or draws in any round, the game ends immediately without playing the remaining rounds and both players get zero rewards.
Note that the $RPS(n)$ game is different from playing the RPS game repeatedly for $n$ times because players can play less than $n$ rounds and they only receive a non-zero reward if $P_1$ wins in all rounds.
We use $s_k$ to denote the state where players have already played $k$ RPS games and are at the $k + 1$ round. 
It is easy to verify that the NE policy for both players is to play Rock, Paper, or Scissors with equal probability at each state.
Under this joint NE policy, $P_1$ can win one RPS game with $1/3$ probability, and the probability for $P_1$ to win all $n$ rounds and get a non-zero reward is $1/3^n$.

Consider using standard minimax-Q learning to solve the $RPS(n)$ game.
With Q-functions initialized to zero, we execute the exploration policy to collect samples and perform the update in Eq.~(\ref{eq:minimax-q}).
Note all state-actions pairs are required to be visited to guarantee convergence to the NE. Therefore, in this sparse-reward game, random exploration will clearly take ${\mathcal{O}}(3^n)$ steps to get a non-zero reward. 
Moreover, even if the exploration policy is perfectly set to the NE policy, the probability for $P_1$ to get the non-zero reward by winning all RPS games is still $\mathcal{O}(1/3^n)$, 
requiring at least $\mathcal{O}(3^n)$ samples to learn the NE Q-values of the $RPS(n)$ game.

\begin{algorithm}[t]
\caption{Subgame curriculum learning}\label{alg:framework}
\KwIn{state sampler $\mathrm{oracle}(\cdot)$.}
Initialize policy $\bm{\pi}$\;
\Repeat{$\bm{\pi}$ converges}
{
    Sample $s^0$ $\sim \mathrm{oracle}(\mathcal{\mathcal{S}})$\;
    Rollout $\bm{\pi}$ in $\mathcal{MG}(s^0)$\;
    Train $\bm{\pi}$ via MARL\;
}
\KwOut{final policy $\bm{\pi}$.}
\end{algorithm}

\subsection{From Exponential to Linear Complexity}
An important observation is that the states in later rounds become exponentially rare in the samples generated by starting from the fixed initial state.
If we can directly reset the game to these states and design a smart order of minimax-Q updates on the subgames induced by these states, the NE learning can be accelerated significantly.
Note that $RPS(n)$ can be regarded as the composition of $n$ individual $RPS(1)$ games, a suitable order of learning would be from the easiest subgame $RPS(1)$ starting from state $s_{n-1}$ to the full game $RPS(n)$ starting from state $s_0$.
Assuming we have full access to the state space, we first reset the game to $s_{n-1}$ and use minimax-Q to solve subgame $RPS(1)$ with $\mathcal{O}(1)$ samples.
Given that the NE Q-values of $RPS(k)$ are learned, the next subgame $RPS(k+1)$ is equivalent to an $RPS(1)$ game where the winning reward is the value of state $s_{n-k}$.
By sequentially applying minimax-Q to solve all $n$ subgames from $RPS(1)$ to $RPS(n)$, the number of samples required to learn the NE Q-values is reduced substantially from $\mathcal{O}(3^n)$ to $\mathcal{O}(n)$.

In practice, we usually do not have access to the entire state space and cannot directly start from the last subgame $RPS(1)$.
Instead, we can use a buffer to store all visited states and gradually span the state space.
By resetting games to the newly visited states, the number of samples required to cover the full state space is still $\mathcal{O}(n)$, and we can then apply minimax-Q from $RPS(1)$ to $RPS(n)$.
Therefore, the total number of samples is still $\mathcal{O}(n)$.
We validate our analysis by running experiments on $RPS(n)$ games for $n=1,\cdots,10$ and the results averaged over ten seeds are shown in Fig.~\ref{fig:iterated_rps_result}.
It can be seen that the sample complexity reduces from exponential to linear by running minimax-Q over a smart order of subgames, and the result of using a state buffer in practice is comparable to the result with full access.
The detailed analysis can be found in Appendix~\ref{sec:app:example}.
\footnote{Appendix can be found at \url{http://arxiv.org/abs/2310.04796}.}

\section{Method}\label{sec:method}
\begin{algorithm}[t]
\begin{minipage}{0.9\linewidth}
\caption{Subgame Automatic Curriculum Learning ({\name})}
\label{alg:full}
\KwIn{state buffers $\mathcal{M}$ with capacity $K$, probability $p$ to sample initial state from the state buffer.}
Randomly initialize policy $\pi_i$ and value function $V_i$ for player $i=1, 2$\;
\Repeat{$(\pi_1, \pi_2)$ converges}
{
    $V'_i \gets V_i,\ i=1, 2$\;
    \tcp{Select subgame and train policy.}
    \For{each parallel environment}
    {
        Sample $s^0 \sim \mathrm{sampler}(\mathcal{M})$ with probability $p$, else $s^0 \sim \rho(\cdot)$\;
        Rollout in $\mathcal{MG}(s^0)$ and collect samples\;
    }
    Train $\{\pi_i, V_i\}_{i=1}^{2}$ via MARL\;
    \tcp{Compute weight by Eq.~(\ref{eq:estimate}) and update state buffer.}
    $\tilde{w}^t \gets \alpha \cdot \mathbb{E}[\tilde{V}_i(s^t) - \tilde{V}'_i(s^t)]^2
    + \Var(\{\tilde{V}_i(s^t)\}_{i=1}^2),\\
    \ t=0, \cdots, T$\;    
    $\mathcal{M} \gets \mathcal{M} \cup \{(s^t, \tilde{w}^t)\}_{t=0}^T$\;
    \If{$\|\mathcal{M}\| > K$}
    {
        $\mathcal{M} \gets \mathrm{FPS}(\mathcal{M}, K)$\;
    }
}
\KwOut{final policy $(\pi_1, \pi_2)$.}
\end{minipage}
\end{algorithm}

The motivating example suggests that NE learning can be largely accelerated by running MARL algorithms in a smart order over states.
Inspired by this insight, we present a general framework to accelerate NE learning in zero-sum games by training over a curriculum of subgames.
We further propose two practical techniques to instantiate the framework and present the overall algorithm.

\subsection{Subgame Curriculum Learning}

The key issue of the standard sample-and-update framework is that the rollout trajectories always start from the fixed initial state distribution $\rho$, so visiting states that are most critical for efficient learning can consume a large number of samples. 
To accelerate training, we can directly reset the environment to those critical states.
Suppose we have an oracle state sampler $\textrm{oracle}(\cdot)$ that can initiate suitable states for the current policy to learn, i.e., generate appropriate induced subgames, we can derive a general-purpose framework in Alg.~\ref{alg:framework}, which we call subgame curriculum learning. Note that this framework is compatible with any MARL algorithm for zero-sum Markov games.\\
A desirable feature of subgame curriculum learning is that it does not change the convergence property of the backbone MARL algorithm, as discussed below.

\begin{restatable}{proposition}{correctness}\label{prop:correctness}
If all initial states $s^0$ with $\rho(s^0) > 0$ are sampled infinitely often, and the backbone MARL algorithm is guaranteed to converge to an NE in zero-sum Markov games, then subgame curriculum learning also produces an NE of the original Markov game.
\end{restatable}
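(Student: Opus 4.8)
The plan is to recognize that subgame curriculum learning is just the backbone MARL algorithm run on a Markov game that agrees with $\mathcal{MG}$ everywhere except in its initial-state distribution, and then to exploit the fact that the NE condition of Definition~1 is imposed \emph{pointwise} over initial states, so enlarging the support of the initial distribution only adds constraints. Let $\mathcal{S}_\infty \subseteq \mathcal{S}$ be the set of states drawn from $\mathrm{oracle}(\mathcal{S})$ infinitely often during a run of Alg.~\ref{alg:framework}; by hypothesis $\{s^0 : \rho(s^0) > 0\} \subseteq \mathcal{S}_\infty$. Let $\mathcal{MG}'$ be the Markov game obtained from $\mathcal{MG}$ by replacing $\rho$ with any $\rho'$ supported exactly on $\mathcal{S}_\infty$ (e.g.\ the long-run empirical frequency of sampled initial states), keeping $(\mathcal{N}, \mathcal{S}, \bm{\mathcal{A}}, P, \bm{R}, \gamma)$ unchanged. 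Each rollout in Alg.~\ref{alg:framework} is a genuine trajectory of $\mathcal{MG}$ begun at an oracle-sampled state, and the MARL update is applied without modification, so Alg.~\ref{alg:framework} \emph{is} the backbone algorithm executed on $\mathcal{MG}'$.

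Next I would check that the backbone's convergence guarantee transfers to $\mathcal{MG}'$. That guarantee has the form ``under sufficient state(-action) visitation the iterates converge to an NE'' --- for minimax-Q, for instance, one needs every state-action pair visited infinitely often, after which the iterates reach the NE Q-function, which by Eq.~(\ref{eq:v_ne})--(\ref{eq:q_ne}) is a fixed point independent of the initial distribution. Resetting episodes to the larger support $\mathcal{S}_\infty$ rather than always starting from $\rho$ can only increase state visitation, so whatever coverage the backbone requires on $\mathcal{MG}$ is still met on $\mathcal{MG}'$. Hence the backbone --- and therefore Alg.~\ref{alg:framework} --- converges to a joint policy $\bm{\pi}$ that is an NE of $\mathcal{MG}'$.

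Finally I would transfer the equilibrium property back to the original game. By Definition~1 applied to $\mathcal{MG}'$, for every $s^0$ with $\rho'(s^0) > 0$ we have $\pi_i = \argmax_{\pi_i'} V_i^{(\pi_i', \pi_{-i})}(s^0)$ for $i \in \{1,2\}$. Since $\{s^0 : \rho(s^0) > 0\} \subseteq \mathcal{S}_\infty = \mathrm{supp}(\rho')$, this holds in particular at every initial state of the original game, which is exactly the assertion that $\bm{\pi}$ is an NE of $\mathcal{MG}$.

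The only delicate point I anticipate is the second step: claiming in full generality that enlarging the initial-state support preserves convergence. This is immediate for value-iteration-type backbones, where the NE value/Q functions are distribution-free fixed points and only coverage matters; for a policy-gradient backbone one implicitly needs the convergence guarantee to be ``coverage-based'' rather than tied to the particular $\rho$. I would therefore phrase the proposition for backbones whose guarantee has this form (which covers all algorithms used in the paper) and present the reduction to $\mathcal{MG}'$ as the core of the written proof.
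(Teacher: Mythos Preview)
Your argument is correct and follows the same core idea as the paper: the NE condition in Definition~1 is imposed pointwise over initial states, so if the converged policy is an NE at every state sampled infinitely often and $\{s^0:\rho(s^0)>0\}$ is contained in that set, it is an NE of the original game. The paper's proof is a two-sentence version of exactly this observation, without your auxiliary game $\mathcal{MG}'$ or the discussion of when the backbone's convergence guarantee transfers; your write-up is more careful, but the route is the same.
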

The proof can be found in Appendix~\ref{sec:app:proof}.
Note that such a requirement is easy to satisfy. For example, given any state sampler $\textrm{oracle}(\cdot)$, we can construct a valid mixed sampler by sampling from $\textrm{oracle}(\cdot)$ for probability $0<p<1$ and sampling from $\rho$ for probability $1-p$.

\textbf{Remark.}
With a given state sampler, the only requirement of our subgame curriculum learning framework is that the environment can be reset to a desired state to generate the induced game.
This is a standard assumption in the curriculum learning literature~\citep{florensa2018automatic,matiisen2019teacher,portelas2020teacher} and is feasible in many RL environments. 
For environments that do not support this feature, we can simply reimplement the reset function to make them compatible with our framework.

\begin{figure*}[t]
    \centering
    \includegraphics[width=0.8\textwidth]{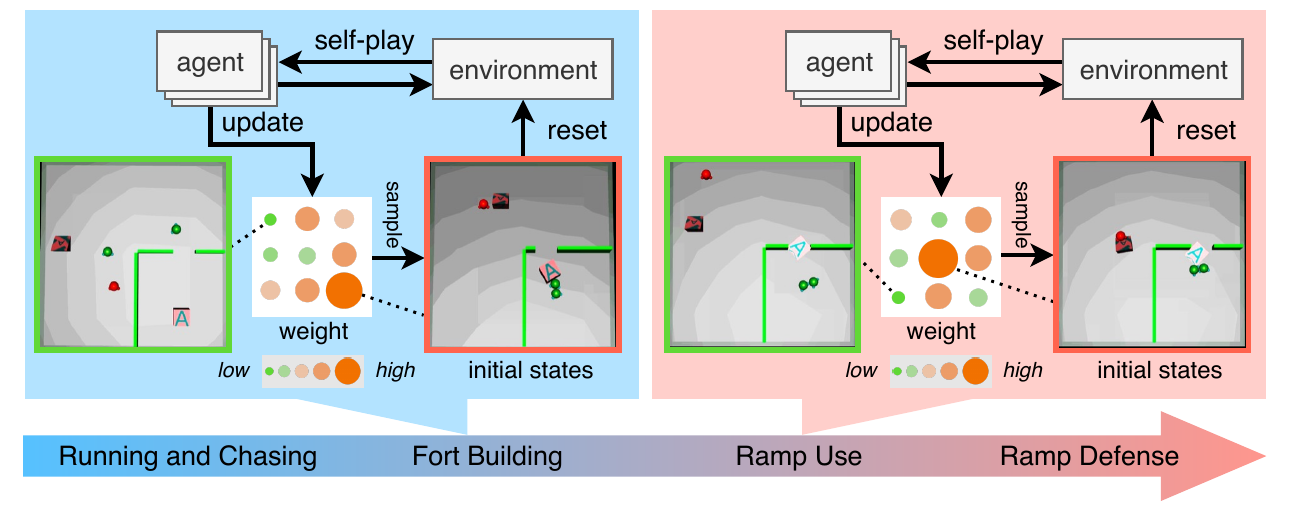}
    \vspace{-4mm}
    \caption{Illustration of {\name} in the hide-and-seek environment. In the Fort Building stage, the states with hiders near the box have high weights (red) and agents can easily learn to build a fort by practicing on these subgames, while the states with randomly spawned hiders have low weights (green) and contribute less to learning.}
    \label{fig:overview}
    \vspace{-4mm}
\end{figure*}

\subsection{Subgame Sampling Metric}

A key question is how to instantiate the oracle sampler, i.e., \emph{which subgame should we train on for faster convergence}?
Intuitively, for a particular state $s$, if its value has converged to the NE value, that is, $V_i(s)=V_i^*(s)$, we should no longer train on the subgame induced by it.
By contrast, if the gap between its current value and the NE value is substantial, we should probably train more on the induced subgame. 
Thus, a simple way is to use the squared difference of the current value and the NE value as the weight for a state and sample states with probabilities proportional to the weights.
Concretely, the state weight can be written as
\begin{align}
    w(s) &= \frac{1}{2} \sum_{i=1}^2(V_i^*(s) - V_i(s))^2 \\
    &= \mathbb{E}_i\big[(V_1^*(s) - \tilde{V}_i(s))^2\big] \\
    &= \mathbb{E}_i\big[V_1^*(s) - \tilde{V}_i(s)\big]^2 + \mathrm{Var}_i\big[V_1^*(s)-\tilde{V}_i(s)\big], \label{eq:oracle}
\end{align}
where $\tilde{V}_1(s) = V_1(s)$ and $\tilde{V}_2(s) = -V_2(s)$.
The second equality holds because the game is zero-sum and $V_2^*(s) = -V_1^*(s)$.
With random initialization and different training samples, $\{\tilde{V}_i\}_{i=1}^2$ can be regarded as an ensemble of two value functions, and the weight $w(s)$ becomes the expectation over the ensemble.
The last equality further expands the expectation to a bias term and a variance term, and we sample state with probability $P(s) = w(s) / \sum_{s'}w(s')$.
For the motivating example of $RPS(n)$ game, the NE value decreases exponentially from the last state $s_{n-1}$ to the initial state $s_0$.
With value functions initialized close to zero, the prioritized subgames throughout training will move gradually from the last round to the first round, which is approximately the optimal order.

However, Eq.~(\ref{eq:oracle}) is very hard to compute in practice because the NE value is generally unknown.
Inspired by Eq.~(\ref{eq:oracle}), we propose the following alternative state weight
\begin{small}
{
\begin{align}
    \tilde{w}(s) =  \alpha \cdot \mathbb{E}_i\big[\tilde{V}_i^{{\pi}_i^{(t)}}(s) - \tilde{V}^{{\pi}_i^{(t-1)}}_i(s)\big]^2 + \mathrm{Var}_i\big[\tilde{V}_i(s)\big], \label{eq:estimate}
\end{align}
}
\end{small}
which takes a hyperparameter $\alpha$ and uses the difference between two consecutive value function checkpoints instead of the difference between the NE value and the current value in Eq.~(\ref{eq:oracle}).
The first term in Eq.~(\ref{eq:estimate}) measures how fast the value functions change over time.
If this term is large, the value functions are changing constantly and still far from the NE value; if this term is marginal, the value functions are probably close to the converged NE value.
The second term in Eq.~(\ref{eq:estimate}) measures the uncertainty of the current learned values and is the same as the variance term in Eq.~(\ref{eq:oracle}) because $V^*_1(s)$ is a constant. 
If $\alpha=1$, Eq.~(\ref{eq:estimate}) approximates Eq.~(\ref{eq:oracle}) as $t$ increases.
It is also possible to train an ensemble of value functions for each player to further improve the empirical performance.
Additional analysis can be found in Appendix~\ref{sec:app:metric}.

Since Eq.~(\ref{eq:estimate}) does not require the unknown NE value to compute, it can be used in practice as the weight for state sampling and can be implemented for most MARL algorithms. 
By selecting states with fast value change and high uncertainty, our framework prioritizes subgames where agents' performance can quickly improve through learning.

\subsection{Particle-based Subgame Sampler}\label{sec:method:sampler}

With the sample weight at hand, we can generate subgames by sampling initial states from the state space.
But it is impractical to sample from the entire space which is usually unavailable and can be exponentially large for complex games.
Typical solutions include training a generative adversarial network (GAN)~\citep{dendorfer2020goal} or using a parametric Gaussian mixture model (GMM)~\citep{portelas2020teacher} to generate states for automatic curriculum learning.
However, parametric models require a large number of samples to fit accurately and cannot adapt instantly to the ever-changing weight in our case.
Moreover, the distribution of weights is highly multi-modal, which is hard to capture for many generative models.

We instead adopt a particle-based approach and maintain a large state buffer $\mathcal{M}$ using all visited states throughout training to approximate the state space.
Since the size of the buffer is limited while the state space can be infinitely large, it is important to keep representative samples that are sufficiently far from each other to ensure good coverage of the state space.
When the number of states exceeds the buffer's capacity $K$, we use farthest point sampling (FPS)~\citep{qi2017pointnet} which iteratively selects the farthest point from the current set of points.
In our implementation, we first normalize each dimension of the states and the distance between two states is simply the Euclidean distance. More details can be found in Appendix~\ref{sec:app:fps}.

\begin{figure*}[t]
\centering
\subfigure[MPE: exploitability.]
{
    \includegraphics[width=0.285\textwidth]{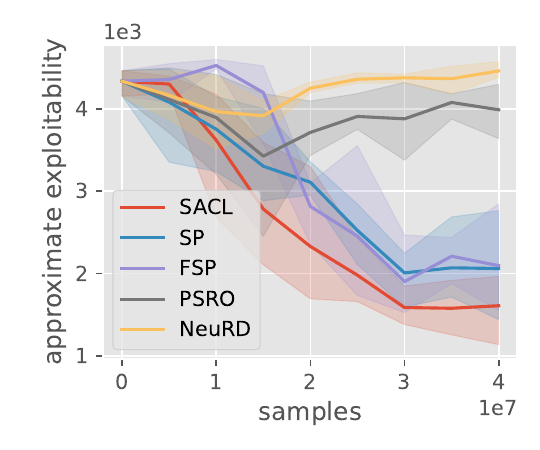}
    \label{fig:exp:mpe_default}
}
\hfill
\subfigure[MPE hard: exploitability.]
{
    \includegraphics[width=0.285\textwidth]{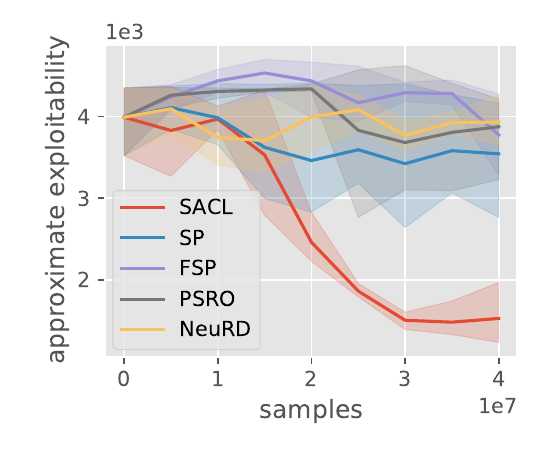}
    \label{fig:exp:mpe_hard}
}
\hfill
\subfigure[HnS: number of samples.]
{
    \includegraphics[width=0.38\textwidth]{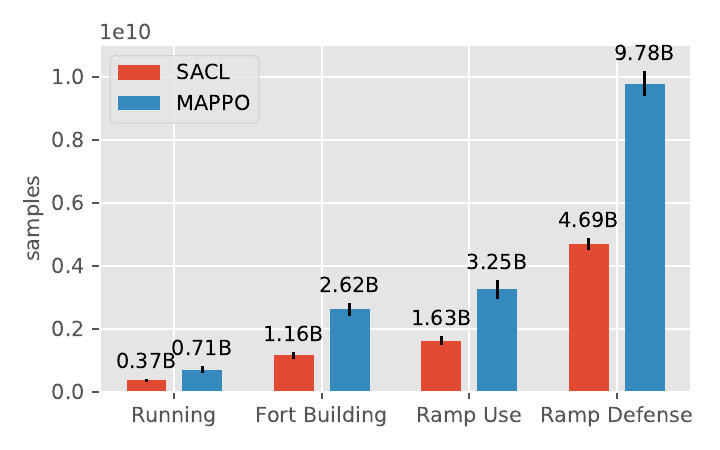}
    \label{fig:exp:hns_samples}
}
\vspace{-2mm}
\caption{Main experiment results in (a) MPE, (b) MPE hard, and (c) Hide-and-seek.}
\label{fig:exp:mpe_and_hns}
\end{figure*}

\begin{table*}[t]
\begin{center}
\begin{tabularx}{0.9\textwidth}{@{\extracolsep{\fill}}*{6}{c}}
    \toprule
    Scenario & {\name} & SP & FSP & PSRO & NeuRD \\
    \midrule
    pass and shoot & \textbf{0.35 (0.13)} & 0.48 (0.31) & 0.83 (0.10) & 0.80 (0.09) & 0.79 (0.15) \\
    run pass and shoot & \textbf{0.60 (0.04)} & 0.68 (0.09) & 0.78 (0.08) & 0.83 (0.04) & 0.95 (0.04) \\
    3 vs 1 with keeper & \textbf{0.45 (0.06)} & 0.83 (0.03) & 0.63 (0.25) & 0.85 (0.05) & 0.81 (0.16) \\
    \bottomrule
\end{tabularx}
\end{center}
\vspace{-2mm}
\caption{Approximate exploitability of learned policies in different GRF scenarios.}
\vspace{-5mm}
\label{tab:exp:grf}
\end{table*}

\subsection{Overall Algorithm}
Combining the subgame sampling metric and the particle-based sampler, we present a realization of the subgame curriculum learning framework, i.e., the \emph{\underline{S}ubgame \underline{A}utomatic \underline{C}urriculum \underline{L}earning} ({\name}) algorithm, which is summarized in Alg.~\ref{alg:full}. When each episode resets, we use the particle-based sampler to generate suitable initial states $s_0$ for the current policy to learn.
To satisfy the requirements in Proposition~\ref{prop:correctness}, we also reset the game according to the initial state distribution $\rho(\cdot)$ with $0.3$ probability.
After collecting a number of samples, we train the policies and value functions using MARL.
The weights for the newly collected states are computed according to Eq.~(\ref{eq:estimate}) and used to update the state buffer $\mathcal{M}$.
If the capacity of the state buffer is exceeded, we use FPS to select representative states-weight pairs and delete the others.
An overview of {\name} in the hide-and-seek game is illustrated in Fig.~\ref{fig:overview}.

\section{Experiment}\label{sec:expr}
\begin{figure*}[t]
\centering
\subfigure[Fort Building.]
{
    \includegraphics[width=0.19\textwidth]{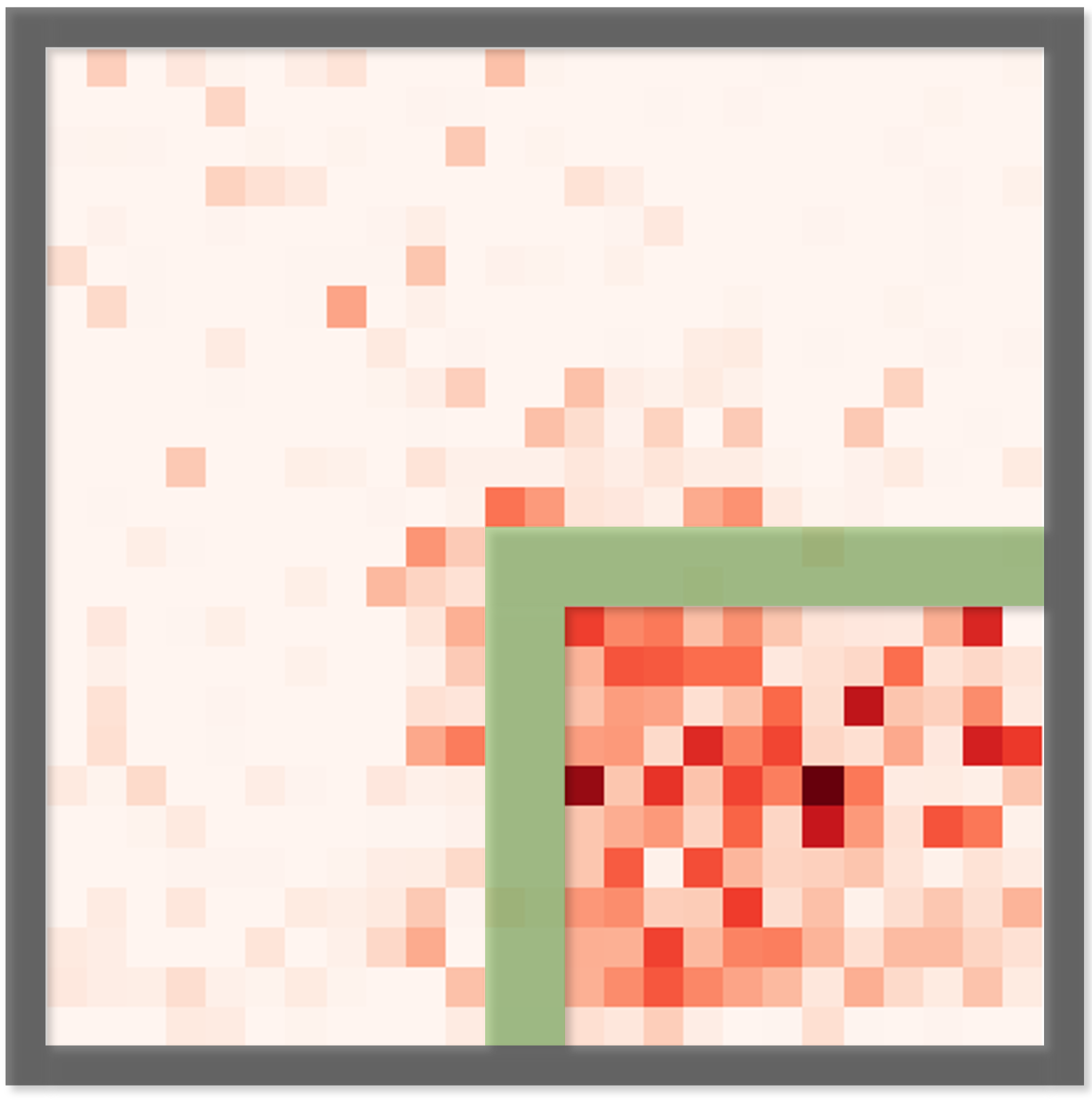}
    \label{fig:exp:hns_fort}
}
\hfill
\subfigure[Ramp Use.]
{
    \includegraphics[width=0.19\textwidth]{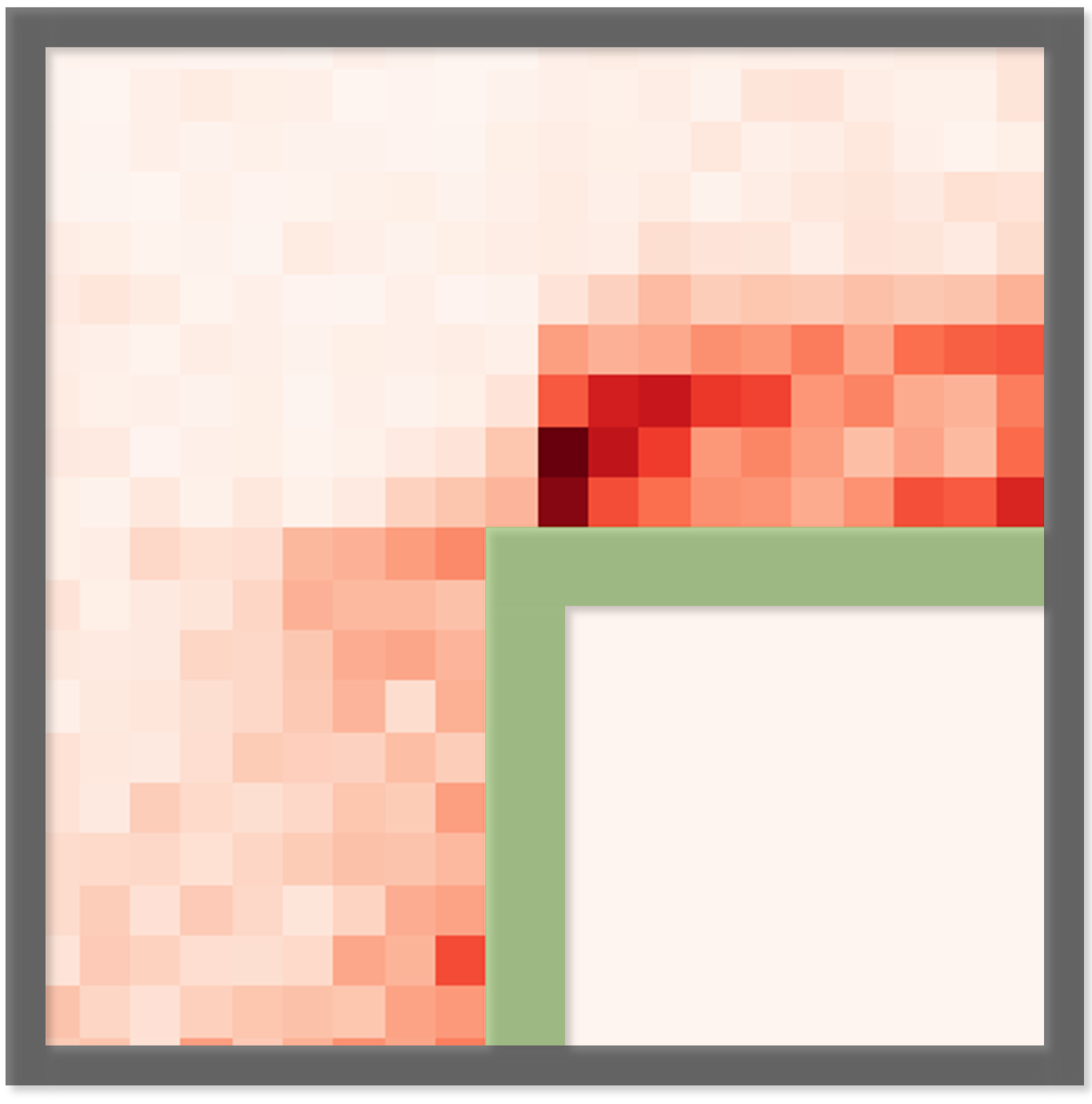}
    \label{fig:exp:hns_ramp}
}
\hfill
\subfigure[Ablation on metric.]
{
    \includegraphics[width=0.25\textwidth]{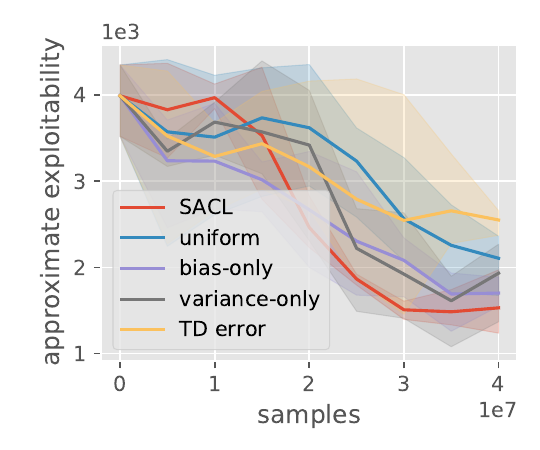}
    \label{fig:exp:ablation_metric}
}
\hfill
\subfigure[Ablation on generator.]
{
    \includegraphics[width=0.25\textwidth]{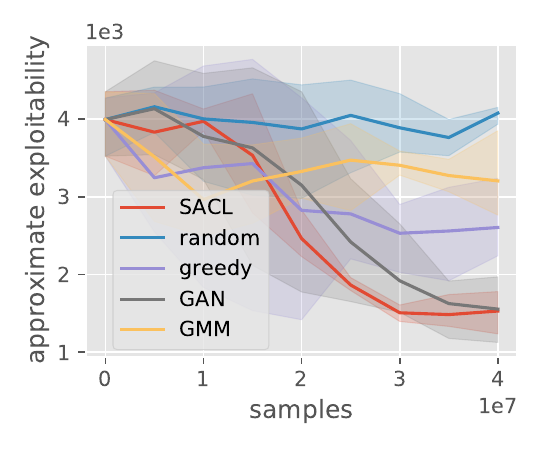}
    \label{fig:exp:ablation_sampler}
}
\caption{Visualization of the state distributions in HnS (a-b) and ablation studies (c-d).}
\label{fig:exp:hns}
\end{figure*}

We evaluate {\name} in three different zero-sum environments: Multi-Agent Particle Environment (MPE)~\citep{lowe2017multi}, Google Research Football (GRF)~\citep{kurach2020google}, and the hide-and-seek (HnS) environment~\citep{baker2020emergent}.
We use a state-of-the-art MARL algorithm MAPPO~\citep{yu2022mappo} as the backbone in all experiments. We evaluate the performance of policies by exploitability. How to define and compute the exploitability can be found in Appendix~\ref{sec:app:exploitability}.

\subsection{Main Results}
We first compare the performance of {\name} in three environments against the following baselines for solving zero-sum games: self-play (SP), two popular variants including Fictitious Self-Play (FSP)~\citep{heinrich2015fictitious} and Neural replicator dynamics (NeuRD)~\citep{hennes2020neural}, and a population-based training method policy-space response oracles (PSRO)~\citep{lanctot2017unified}.
More implementation details can be found in Appendix~\ref{sec:app:training}.

\textbf{Multi-Agent Particle Environment.}
We consider the \textit{predator-prey} scenario in MPE, where three slower cooperating predators chase one faster prey in a square space with two obstacles. 
In the default setting, all agents are spawned uniformly in the square. 
We also consider a harder setting where the predators are spawned in the top-right corner and the prey is spawned in the bottom-left corner.
All algorithms are trained for 40M environment samples and the curves of approximate exploitability w.r.t. sample over three seeds are shown in Fig.~\ref{fig:exp:mpe_default} and \ref{fig:exp:mpe_hard}.
{\name} converges faster and achieves lower exploitability than all baselines in both settings, and its advantage is more obvious in the hard scenario.
This is because the initial state distribution in corners makes the full game challenging to solve, while {\name} generates an adaptive state distribution and learns on increasingly harder subgames to accelerate NE learning.
More results and discussions can be found in Appendix~\ref{sec:app:mpe}.

\textbf{Google Research Football.}
We evaluate {\name} in three GRF academy scenarios, namely \textit{pass and shoot}, \textit{run pass and shoot}, and \textit{3 vs 1 with keeper}.
In all scenarios, the left team's agents cooperate to score a goal and the right team's agents try to defend them.
The first scenario is trained for 300M environment samples and the last two scenarios are trained for 400M samples.
Table~\ref{tab:exp:grf} lists the approximate exploitabilities of different methods' policies over three seeds, and {\name} achieves the lowest exploitability.
Additional cross-play results and discussions can be found in Appendix~\ref{sec:app:grf}.

\textbf{Hide-and-seek environment.}
HnS is a challenging zero-sum game with known NE policies, which makes it possible for us to directly evaluate the number of samples used for NE convergence.
We consider the \textit{quadrant} scenario where there is a room with a door in the lower right corner.
Two hiders, one box, and one ramp are spawned uniformly in the environment, and one seeker is spawned uniformly outside the room.
Both the box and the ramp can be moved and locked by agents.
The hiders aim to avoid the lines of sight from the seeker while the seeker aims to find the hiders.

There is a total of four stages of emergent stages in HnS, i.e., Running and Chasing, Fort Building, Ramp Use, and Ramp Defense.
As shown in Fig.~\ref{fig:exp:hns_samples}, {\name} with MAPPO backbone produces all four stages and converges to the NE policy with only 50\% the samples of MAPPO with self-play.
We also visualize the initial state distribution to show how {\name} selects appropriate subgames for agents to learn.
Fig.~\ref{fig:exp:hns_fort} depicts the distribution of hiders' position in the Fort Building stage.
The probabilities of states with hiders inside the room are much higher than states with hiders outside, making it easier for hiders to learn to build a fort with the box. 
Similarly, the distribution of the seeker's position in the Ramp Use stage is shown in Fig.~\ref{fig:exp:hns_ramp}, and the most sampled subgames start from states where the seeker is close to the walls and is likely to use the ramp. 

\subsection{Ablation Study}
We perform ablation studies to examine the effectiveness of the proposed sampling metric and particle-based sampler.
All experiments are done in the hard \textit{predator-prey} scenario of MPE and the results are averaged over three seeds.
More ablation studies on state buffer size, subgame sample probability, and other hyperparameters can be found in Appendix~\ref{sec:app:mpe}.

\textbf{Subgame sampling metric.}
The sampling metric used in {\name} follows Eq.~(\ref{eq:estimate}) which consists of a bias term and a variance term. 
We compare it with five other metrics including a uniform metric, a bias-only metric, a variance-only metric and a temporal difference (TD) error metric. The last metric uses the TD error $|\delta_{t}| = |r^t + \gamma V(s^{t+1}) - V(s^t)|$ as the weight, which can be regarded as an estimation of value uncertainty.
The results are shown in Fig.~\ref{fig:exp:ablation_metric} and the sampling metric used by {\name} outperforms both the bias-only metric and variance-only metric. 

\textbf{State generator.}
We substitute the particle-based sampler with other state generators including using GAN from the work~\citep{dendorfer2020goal} and using GMM from the work~\citep{portelas2020teacher}. 
We also replace the FPS buffer update method with a uniform one that randomly keeps states and a greedy one that keeps states with the highest weights.
Results in Fig.~\ref{fig:exp:ablation_metric} show that our particle-based sampler with FPS update leads to the fastest convergence and lowest exploitability.

\section{Related Work}\label{sec:related}
A large number of works achieve faster convergence in zero-sum games by playing against an increasingly stronger policy.
The most popular methods are self-play and its variants~\citep{heinrich2016deep,bai2020near,jin2021v,perolat2022mastering}.
Self-play creates a natural curriculum and leads to emergent complex skills and behaviors~\citep{bansal2018emergent,baker2020emergent}.
Population-based training like double oracle~\citep{mcmahan2003planning} and policy-space response oracles (PSRO)~\citep{lanctot2017unified} extend self-play by training a pool of policies.
Some follow-up works further accelerate training by constructing a smart mixing strategy over the policy pool according to the policy landscape~\citep{balduzzi2019open,perez2021modelling,liu2021towards,feng2021neural}.
\citep{mcaleer2021xdo} extends PSRO to extensive-form games by building policy mixtures at all states rather than only the initial states, but it still directly solves the full game starting from some fixed states.

In addition to policy-level curriculum learning methods, other works to accelerate training in zero-sum games usually adopt heuristics and domain knowledge like the number of agents~\citep{long2020evolutionary,wang2020few} or environment specifications~\citep{berner2019dota,serrino2019finding,tang2021discovering}.
By contrast, our method automatically generates a curriculum over subgames without domain knowledge and only requires the environments can be reset to desired states.
Subgame-solving technique~\citep{brown2017safe} is also used in online strategy refinement to improve the blueprint strategy of a simplified abstract game.
Another closely related work to our method is \citep{chen2021temporal} which combines backward induction with policy learning, but this method requires knowledge of the game topology and can only be applied to finite-horizon Markov games.

Besides zero-sum games, curriculum learning is also studied in cooperative settings.
The problem is often formalized as goal-conditioned RL where the agents need to reach a specific goal in each episode. 
Curriculum learning methods design or train a smart sampler to generate proper task configurations or goals that are most suitable for training advances w.r.t. some progression metric~\citep{chen2016variational,florensa2017reverse,florensa2018automatic,racaniere2019automated,matiisen2019teacher,portelas2020teacher,dendorfer2020goal}.
Such a metric typically relies on an explicit signal, such as the goal-reaching reward, success rates, or the expected value of the testing tasks. 
However, in the setting of zero-sum games, these explicit progression metrics become no longer valid since the value associated with a Nash equilibrium can be arbitrary. 
A possible implicit metric is value disagreement~\citep{zhang2020automatic} used in goal-reaching tasks, which can be regarded as the variance term in our metric.
By adding a bias term, our metric approximates the squared distance to NE values and gives better results in ablation studies.

Our work adopts a non-parametric subgame sampler which is fast to learn and naturally multi-modal, instead of training an expensive deep generative model like GAN~\citep{florensa2018automatic}. 
Such an idea has been recently popularized in the literature. 
Some representative samplers are Gaussian mixture model~\citep{warde2018unsupervised}, Stein variational inference~\citep{chen2021variational}, Gaussian process~\citep{mehta2020active}, or simply evolutionary computation~\citep{wang2019poet,wang2020enhanced}. 
Technically, our method is related to prioritized experience replay~\citep{schaul2015prioritized,florensa2017reverse,li2022pmr} with the difference that we maintain a buffer~\citep{warde2018unsupervised} to approximate the uniform distribution over the state space. 
Our method is also related to episodic memory replay~\citep{blundell2016model,pritzel2017neural} which stores past experience and chooses actions based on previous success when similar states are encountered.
By contrast, our method proactively resets the environment to intermediate states and collects experience in the sampled subgame.

\section{Conclusion}\label{sec:conclusion}
We present {\name}, a general algorithm for accelerating MARL training in zero-sum Markov games based on the subgame curriculum learning framework.
We propose to use the approximate squared distance to NE values as the sampling metric and use a particle-based sampler for subgame generation.
Instead of starting from the fixed initial states, RL agents trained with {\name} can practice more on subgames that are most suitable for the current policy to learn, thus boosting training efficiency.
We report appealing experiment results that {\name} efficiently discovers all emergent strategies in the challenging hide-and-seek environment and uses only half the samples of MAPPO with self-play.
We hope {\name} can be helpful to speed up prototype development and help make MARL training on complex zero-sum games more affordable to the community.

\clearpage
\section{Acknowledgments}
This research was supported by the National Natural Science Foundation of China (No.62325405, U19B2019, M-0248), Tsinghua University Initiative Scientific Research Program, Tsinghua-Meituan Joint Institute for Digital Life, Beijing National Research Center for Information Science, Technology (BNRist) and Beijing Innovation Center for Future Chips.

\bibliography{aaai24}

\clearpage
\setcounter{secnumdepth}{3}
\setcounter{section}{0}
\renewcommand{\thesection}{\Alph{section}}
\section{Analysis and Proofs}\label{sec:app}

\subsection{Detailed Analysis of the Motivating Example}\label{sec:app:example}

We first show that the entire state space of $RPS(n)$ can be covered within $\mathcal{O}(n)$ samples by using a state buffer and resetting games to the newly visited states.
We start with an empty state buffer, and the game resets according to its initial state distribution $\rho(\cdot)$, which always resets the game to $s_0$.
With a random exploration policy, the probability for the game to transit from $s_0$ to $s_1$ is $1/3$.
Therefore, the number of samples required to visit state $s_1$ in expectation is $\mathbb{E}[n(s_1)] = 3$.
After $s_1$ is visited, this new state will be stored in the state buffer.
Since we select the newly visited states as the initial state, the game will be reset to state $s_1$, and the additional number of samples required to visit state $s_2$ in expectation is also $\mathbb{E}[n(s_2)] = 3$.
In general, by starting from state $s_{k-1}$, the expected number of samples to visit state $s_k$ is $\mathbb{E}[n(s_k)] = 3$, $k = 1, 2, \cdots, n - 1$.
Therefore, the total number of samples required to cover the whole state space is $\sum_{k=1}^{n-1} \mathbb{E}[n(s_k)] = 3(n - 1)$, which is $\mathcal{O}(n)$.

Given that the state buffer has covered the entire state space, we then show that the NE Q-value of $RPS(n)$ can be learned by solving subgames with minimax-Q backward from $RSP(1)$ to $RPS(n)$.
Consider using minimax-Q to solve $RPS(1)$, we can set the learning rate $\alpha=1$ since the transition is deterministic, and the NE Q-value of a state-action pair $(s, \bm{a})$ can be learned when this pair is in the collected samples.
Therefore, to learn the NE Q-values of $RPS(1)$, we have to collect all state-action pairs at least one time.
With a random exploration policy, the number of samples required to cover all state-action pairs is $\sum_{i=1}^9 9/i = 25.46 < 26$.
Therefore, the NE Q-values of $RPS(1)$ can be learned within $26$ samples in expectation.
Given that the NE Q-values of $RPS(k)$ are learned, the NE Q-values of $RPS(k + 1)$ are only wrong at the first state and can be learned within $26$ episodes in expectation.
Note that the expected episode length of $RPS(\infty)$ is $1.5$, so the expected episode length of $RPS(k)$ is less than $1.5$.
Consider the episode used to learn the NE Q-values of the first state of $RPS(k + 1)$, either $P_1$ wins and the expected episode length is less than $1 + 1.5=2.5$, or $P_1$ draws or loses and the episode length is $1$.
In both cases, the episode length is less than $2.5$, so the number of samples used is less than $26*2.5=65$.
Therefore, the total number of samples used to learn the NE Q-values from $RPS(1)$ to $RPS(n)$ is less than $65(n - 1)$, which is $\mathcal{O}(n)$.

Since it takes $\mathcal{O}(n)$ samples to cover the entire state space and $\mathcal{O}(n)$ samples to learn the NE Q-values from $RPS(1)$ to $RPS(n)$, the total number of samples is still $\mathcal{O}(n)$.

\subsection{Proof of Proposition~\ref{prop:correctness}}\label{sec:app:proof}

\correctness*
\begin{proof}
When the policy trained by subgame curriculum learning converges, it is an NE of all subgames induced by the proposed states, including all initial states $s_0$ with $\rho(s_0) > 0$. 
Therefore, it is an NE of the original Markov game.
\end{proof}

\subsection{Detailed Analysis of the State Sampling Metric}\label{sec:app:metric}

We approximate the squared difference between the current value and the NE value by Eq.~(\ref{eq:estimate}), i.e.
\begin{align}
    w(s) &= \mathbb{E}_i\big[(V_1^*(s) - \tilde{V}_i(s))^2\big]\nonumber \\
    &\approx \alpha \cdot \mathbb{E}_i\big[\tilde{V}_i^{(t)}(s) - \tilde{V}^{(t-1)}_i(s)\big]^2 + \mathrm{Var}_i\big[\tilde{V}_i(s)\big]. \nonumber
\end{align}
The first term in Eq.~(\ref{eq:estimate}) uses a hyperparameter $\alpha$ and the difference between two consecutive value function checkpoints to estimate the difference between the current value and the NE value.
As shown in Fig.~\ref{fig:app:bias_estimate}, when the value function changes monotonically throughout training, the estimate can be regarded as a first-order approximation of the bias term.
However, the value function of zero-sum games may oscillate up and down in different emergent stages (like in hide-and-seek), as shown in Fig.~\ref{fig:app:oscillate}.
In this case, the difference between two value function checkpoints is no longer an approximation of the distance to the NE value but a first-order approximation of the difference between the current value and the next local minimal or local maximal value $V_1^{(*, k)}$, and the weight becomes the approximated squared difference between the current value and the next local optimal value, i.e.,
\begin{align}    
    w(s) &= \alpha \cdot \mathbb{E}_i\big[\tilde{V}_i^{(t)}(s) - \tilde{V}^{(t-1)}_i(s)\big]^2 + \mathrm{Var}_i\big[\tilde{V}_i(s)\big] \nonumber \\
    &\approx \mathbb{E}_i\big[V_1^{(*, k)}(s) - \tilde{V}_i(s)\big]^2 + \mathrm{Var}_i\big[V_1^{(*, k)}(s)-\tilde{V}_i(s)\big] \nonumber \\
    &= \mathbb{E}_i\big[(V_1^{(*, k)}(s) - \tilde{V}_i(s))^2\big].
\end{align}
Therefore, by using the weight in Eq.~(\ref{eq:estimate}), we are not directly prioritizing states where the values are far from the NE values but prioritizing states where the values are far from the next local optimal value.
For example, in Fig.~\ref{fig:app:oscillate}, before the value function has learned the first local maximal value $V_1^{(*, 1)}$, we will give larger weights to states that are far from the $V_1^{(*, 1)}$ to accelerate the first stage of learning $V_1^{(*, 1)}$.
After $V_1^{(*, 1)}$ is successfully learned, we will then prioritize states that are far from the second local optimal value $V_2^{(*, 1)}$ and accelerate the second stage of learning $V_1^{(*, 2)}$.
Finally, we learn towards the NE value $V_1^{(*, 3)} = V_1^*$.
By accelerating the learning in each stage, we make the NE learning process more efficient in total.

It is also possible to train an ensemble of value functions for each player to improve the estimation.
Suppose we train $M$ value functions for player $i$ and denote them as $\{\tilde{V}_{i, m}\}_{m=1}^M$ for $i=1, 2$, then the weight for state $s$ becomes
\begin{align}    
    w(s) &= \alpha \cdot \mathbb{E}_{i, m}\big[\tilde{V}_{i, m}^{(t)}(s) - \tilde{V}^{(t-1)}_{i, m}(s)\big]^2 + \mathrm{Var}_{i, m}\big[\tilde{V}_{i, m}(s)\big],
\end{align}
where the expectation and variance are taken over both the player index $i$ and the ensemble index $m$.


\begin{figure}[t]
\centering
\includegraphics[width=0.8\linewidth]{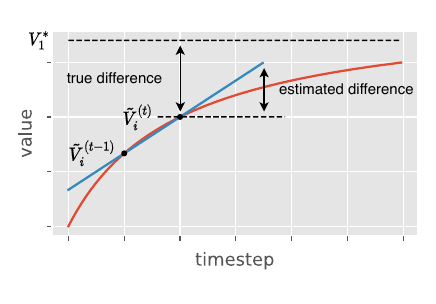}
\vspace{-2em}
\caption{Approximation of the bias term when value function changes monotonically.}
\label{fig:app:bias_estimate}
\vspace{-1em}
\end{figure}

\begin{figure}[t]
\centering
\includegraphics[width=0.8\linewidth]{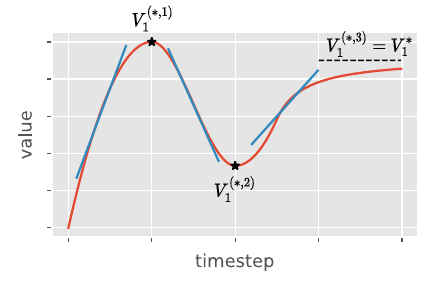}
\vspace{-2em}
\caption{Approximation in different stages when value function oscillates in training.}
\label{fig:app:oscillate}
\end{figure}

\subsection{Difference of {\name} and subgame solving method for extensive-form games}
First, we would like to emphasize that the goal of this work is to accelerate learning in complex fully-observable Markov games. In our experiments, the learning agents do not know the transition of the games, following the standard assumption in reinforcement learning.

For extensive-form games, such as poker, there has been extensive literature on how to construct and solve a subgame~\citep{zhang2021subgame}~\citep{brown2017safe}~\citep{burch2014solving}~\citep{moravcik2016refining}. The idea of subgame solving is first to get a blueprint strategy of the abstracted game and use it to play the original game. As the game progresses and the remaining game becomes tractable, the specific subgame is solved in real-time to create a combined final policy. Subgame solving typically uses iterative updates based on regret matching to find the policy, which requires the traverse of the game tree. 

\section{Implementation Details}\label{app:impl}

\subsection{Implementation of Farthest Point Sampling}\label{sec:app:fps}
In the subgame sampler, we use a state buffer to approximate the whole state space and record the state weights. In principle, the states in the buffer should span the entire state space and distribute uniformly, but the rollout data is usually concentrated and very similar to each other. Therefore, we need to select states that are sufficiently far from each other to ensure good coverage of the state space. Formally, we need to select a subset $S'$ of size $K$ from the whole set $S$ so that the sum of the shortest distances between states in the subset $S'$ is maximized, i.e., $max_{S'\subset S, |S'|=K} \Sigma_{s\in S'} min_{s'\in S'}|s-s'|$. The farthest point sampling is a greedy algorithm that efficiently finds an approximate optimal solution to this problem.

In general, FPS iteratively selects the farthest point from the current set of points. The distance between two states is simply the Euclidean distance. The distance between a state $s_a$ and a set of states $S$ is the smallest distance between $s_a$ and any state in $S$, i.e., $min_{s\in S}|s_a-s|$. For implementation, we first normalize each dimension of the state vector to make all values lie in the range $[0,1]$. Then, we directly use the \emph{farthest\_point\_sampler()} function from the Deep Gragh Library to utilize GPUs for fast and stable results. 

\subsection{Training Details}\label{sec:app:training}
\begin{figure}
    \centering
    \includegraphics[width=0.8\linewidth]{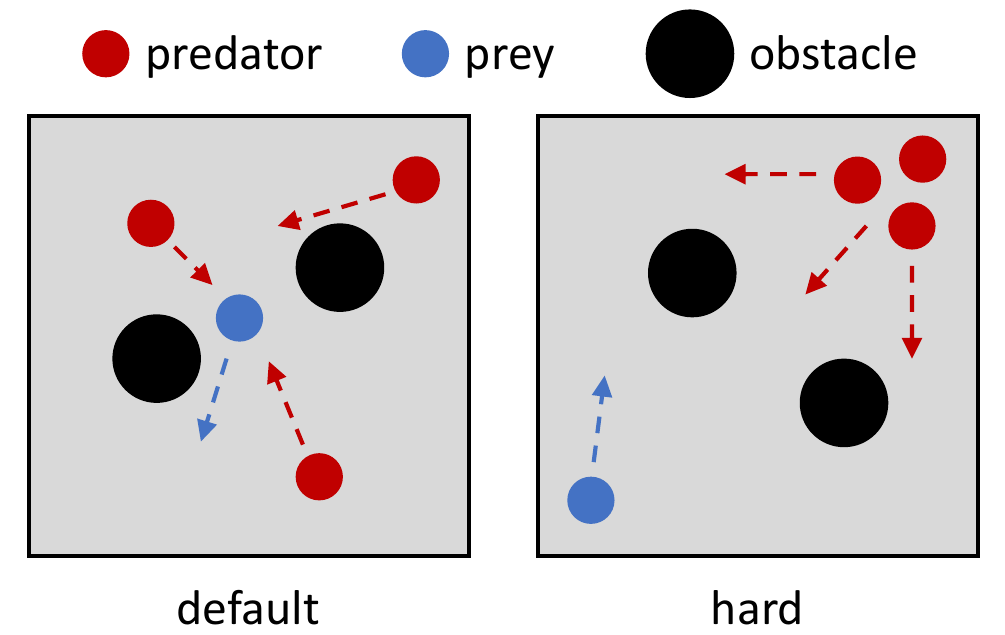}
    \caption{Illustration of the default and hard setting of predator-prey in MPE.}
    \label{fig:app:predator_prey}
\end{figure}

\textbf{Multi-Agent Particle Environment.}
The default and hard settings of the predator-prey scenario in MPE are shown in Fig.~\ref{fig:app:predator_prey}.
The environment is a 2D square space, and the length of a side is 4, i.e., $\{(x, y)| -2 \leq x \leq 2, -2 \leq y \leq 2\}$.
Three predators (red) cooperatively chase one prey (blue), and there are two obstacles in the space.
In the default setting, all agents and obstacles are randomly spawned.
In the hard setting, predators are uniformly spawned in the top-right corner, i.e., $\{(x, y)| 1 \leq x \leq 2, 1 \leq y \leq 2\}$, the prey is spawned in the bottom-left corner, i.e., $\{(x, y)| -2 \leq x \leq -1, -2 \leq y \leq -1\}$, and the obstacles are still randomly generated in the square.

This environment is fully observable, and the state of each agent is a concatenation of the positions and the velocities of all agents and the positions of all obstacles.
The action space is discrete with $5$ actions: idle, up, down, left, right.
The environment lasts for $200$ steps. 
In each step, if any predator collides with the prey, all predators get a reward of $+1$, and the prey gets a reward of $-1$.

The actor and critic networks use the transformer architecture. The inputs first pass through a LayerNorm layer. The normalized states are divided into different entities, including self, other agents, obstacles, and time; then, each entity passes through fully connected layers to get its embedding. The weights of the embedding layers are shared within entities of the same type. Then, the embedding of each entity is concatenated with the self-states and passed through a self-attention network. Then, we average the output of the attention block and concatenate it with the self-embedding to get the final representation. This representation is then passed through a LayerNorm layer and an MLP layer and then produces the value through a critic's head and the action through an actor's head. All hyperparameters for training are listed in Table~\ref{tab:app:mpe}.

\begin{figure}[t]
\begin{minipage}{0.45\textwidth}
\centering
\begin{tabular}{lc}
    \toprule
    Hyperparameters & Value\\
    \midrule
    Learning rate & 5e-4\\
    Discount rate ($\gamma$) & 0.99\\
    GAE parameter ($\lambda_{\textrm{GAE}}$) & 0.95\\
    Gradient clipping & 10.0\\
    Adam stepsize  & 1e-5\\
    Value loss coefficient & 1\\
    Entropy coefficient & 0.01\\
    Parallel threads & 100\\
    PPO clipping & 0.2\\
    PPO epochs & 5\\
    Size of embedding layer &32\\
    Size of MLP layer &64\\
    Size of LSTM layer &64\\
    Residual attention layer &8\\
    probability $p$ & 0.7 \\
    Ensemble size $M$ & 3 \\
    Capacity $K$ & 10000 \\
    Weight of the value difference $\alpha$ & 0.7 \\
    \bottomrule
\end{tabular}
\captionof{table}{Hyperparameters of MPE.}
\label{tab:app:mpe}
\end{minipage}
\hfill
\begin{minipage}{0.45\textwidth}
\centering
\begin{tabular}{lc}
    \toprule
    Hyperparameters & Value\\
    \midrule
    Learning rate & 5e-4\\
    Discount rate ($\gamma$) & 0.99\\
    GAE parameter ($\lambda_{\textrm{GAE}}$) & 0.95\\
    Gradient clipping & 10.0\\
    Adam stepsize  & 1e-5\\
    Value loss coefficient & 1\\
    Entropy coefficient & 0.01\\
    Parallel threads & 200\\
    PPO clipping & 0.2\\
    PPO epochs & 10\\
    Size of MLP layer & 64\\
    probability $p$ & 0.7 \\
    Ensemble size $M$ & 3 \\
    Capacity $K$ & 10000 \\
    Weight of the value difference $\alpha$ & 0.7 \\
    \bottomrule
\end{tabular}
\captionof{table}{Hyperparameters of GRF.}
\label{tab:app:grf}
\end{minipage}
\end{figure}

\begin{table}[t]
\centering
\begin{tabular}{cc}
    \toprule
    Length & Information\\
    \midrule
    22 & (x,y) coordinates of left team players\\
    22 & (x,y) direction of left team players \\
    22 & (x,y) coordinates of right team players\\
    22 & (x,y) direction of right team players \\
    3 & (x, y and z) ball position \\
    3 & ball direction \\
    3 & one hot encoding of ball ownership (none, left, right) \\
    11 & one hot encoding of which player is active \\
    7 & one hot encoding of game mode \\
    \bottomrule
\end{tabular}
\captionof{table}{Information in the state vector of GRF.}
\label{tab:app:grf_obs}
\end{table}

\begin{table}[t]
\centering
\begin{tabular}{lc}
    \toprule
    Hyper-parameters&Value\\
    \midrule
    Learning rate & 3e-4\\
    Discount rate ($\gamma$) & 0.998\\
    GAE parameter ($\lambda_{\textrm{GAE}}$) & 0.95\\
    Gradient clipping & 5.0\\
    Adam stepsize  & 1e-5\\
    Value loss coefficient & 1\\
    Entropy coefficient & 0.01\\
    PPO clipping & 0.2\\
    Chunk length & 10\\
    PPO epochs & 4\\
    Horizon & 80 \\
    Mini-batch size & 64000\\
    Size of embedding layer &128\\
    Size of MLP layer &256\\
    Size of LSTM layer &256\\
    Residual attention layer &32\\
    Weight decay coefficient &$10^{-6}$ \\
    probability $p$ & 0.7 \\
    Ensemble size $M$ & 3 \\
    Capacity $K$ & 10000 \\
    Weight of the value difference $\alpha$ & 1.0 \\
    \bottomrule
\end{tabular}
\captionof{table}{Hyperparameters of HnS.}
\label{tab:app:hns}
\end{table}

\begin{table}[t]
\centering
\begin{tabular}{lc}
    \toprule
    Hyperparameters & Value\\
    \midrule
    Learning rate & 5e-4\\
    Discount rate ($\gamma$) & 0.99\\
    GAE parameter ($\lambda_{\textrm{GAE}}$) & 0.95\\
    Gradient clipping & 20.0\\
    Adam stepsize  & 1e-5\\
    Value loss coefficient & 1\\
    Entropy coefficient & 0.01\\
    PPO clipping & 0.2\\
    chunk length & 10\\
    PPO epochs & 15\\
    Horizon & 60 \\
    Parallel threads & 300 \\
    probability $p$ & 0.7 \\
    Ensemble size $M$ & 3 \\
    Capacity $K$ & 2000 \\
    \bottomrule
\end{tabular}
\captionof{table}{Hyperparameters of the cooperative task in HnS}
\label{tab:app:hns_coop}
\end{table}

\begin{figure*}[t]
\centering
\begin{minipage}{0.24\textwidth}
\begin{subfigure}
    \centering
    \includegraphics[width=\textwidth,trim={8cm 0 8cm 0},clip]{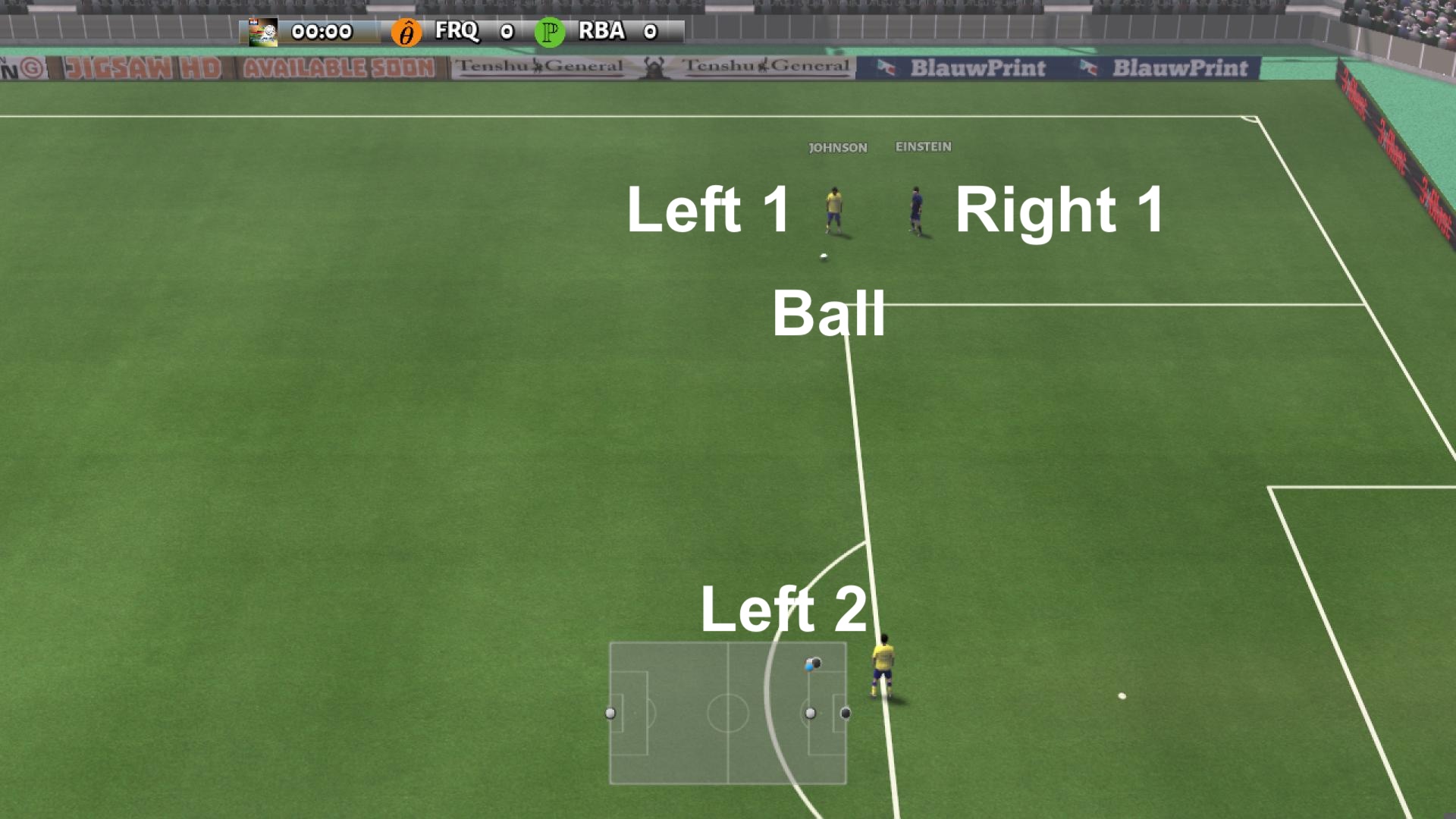}
    \caption{Pass and shoot scenario in GRF.}
    \label{fig:app:run_pass}
\end{subfigure}
\end{minipage}
\hfill
\begin{minipage}{0.24\textwidth}
\begin{subfigure}
    \centering
    \includegraphics[width=\textwidth,trim={8cm 0 8cm 0},clip]{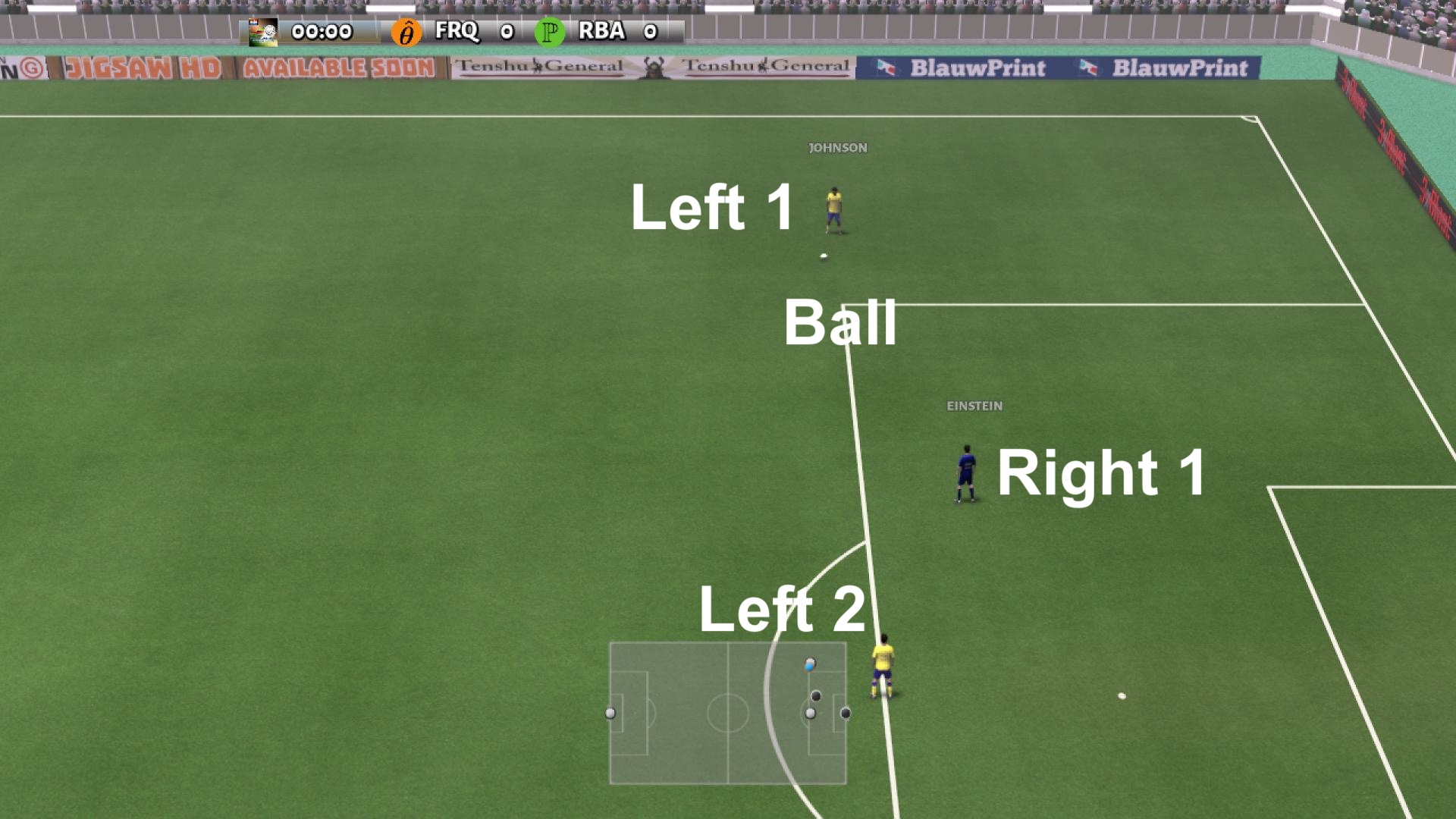}
    \caption{Run, pass and shoot scenario in GRF.}
    \label{fig:app:run_pass_shoot}
\end{subfigure}
\end{minipage}
\hfill
\begin{minipage}{0.24\textwidth}
\begin{subfigure}
    \centering
    \includegraphics[width=\textwidth,trim={8cm 0 8cm 0},clip]{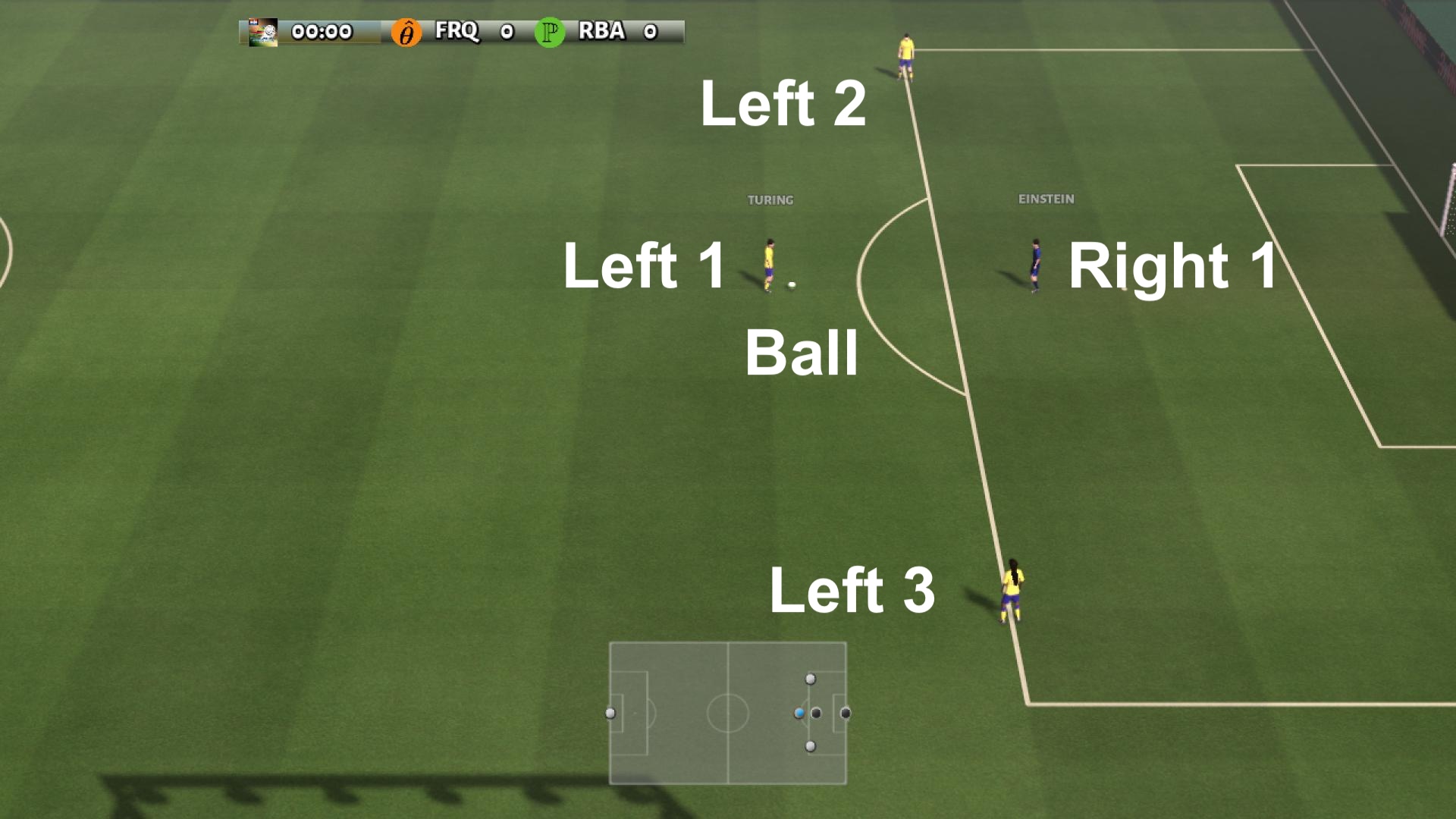}
    \caption{3 vs 1 with keeper scenario in GRF.}
    \label{fig:app:3vs1}
\end{subfigure}
\end{minipage}
\hfill
\begin{minipage}{0.18\textwidth}
\begin{subfigure}
    \centering
    \includegraphics[width=\textwidth]{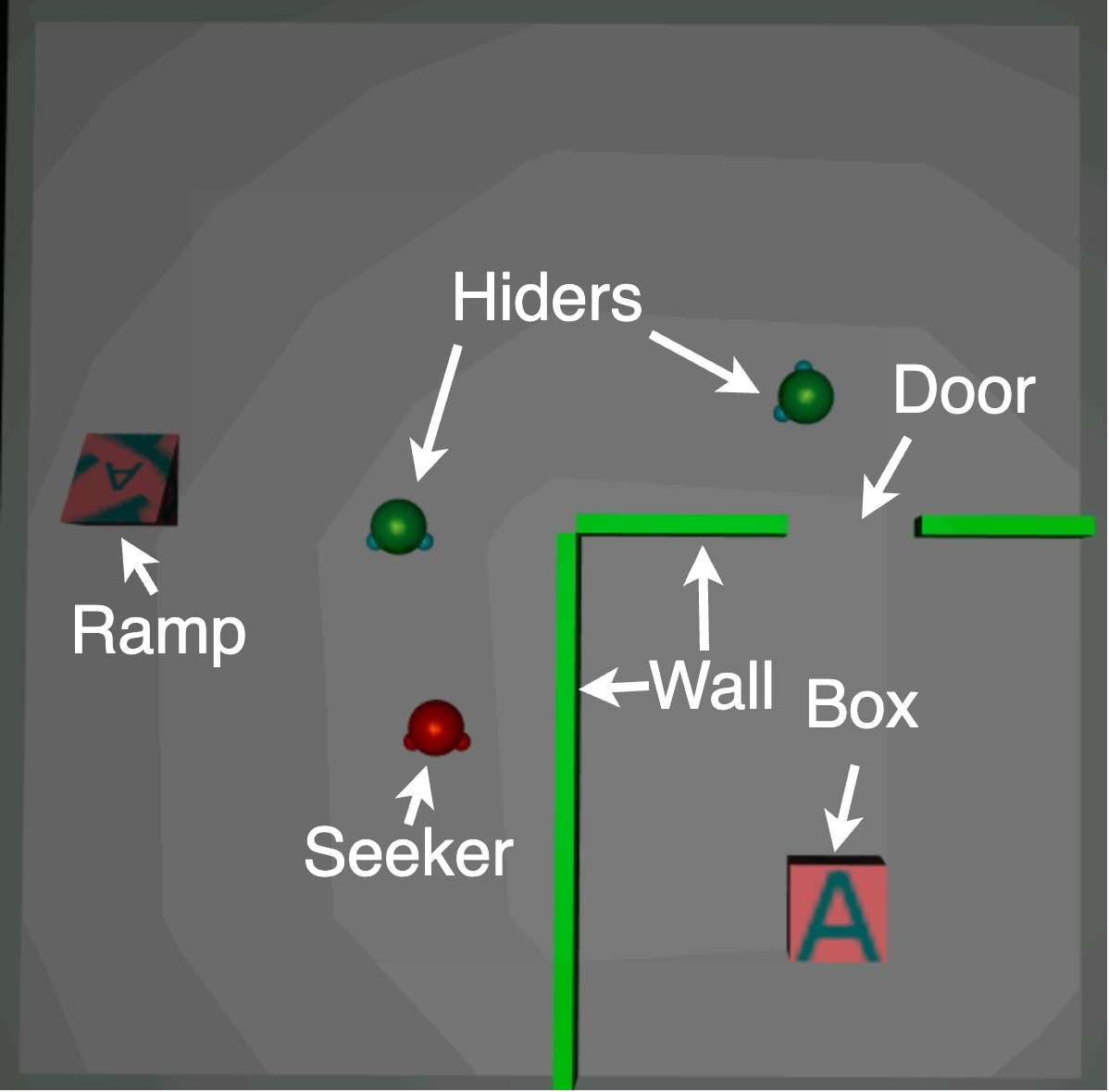}
    \caption{Quadrant scenario in HnS.}
    \label{fig:app:quadrant}
\end{subfigure}
\end{minipage}
\end{figure*}

\textbf{Google Research Football.}
The environment is a physics-based 3D football simulation, and the length and width are 2.0 and 0.9, i.e., $\{(x, y)| -1.0 \leq x \leq 1.0, -0.45 \leq y \leq 0.45\}$. The \textit{pass and shoot} scenario in GRF is shown in Fig.~\ref{fig:app:run_pass}. There are five players and a soccer ball in the environment, with a scripted goalkeeper and two RL attackers on the left side and a scripted goalkeeper and one RL defender on the right side. The left goalkeeper is spawned at $(-1.0, 0.0)$ and the two attackers are spawned at $(0.7,0.0)$ and $(0.7, -0.3)$. The right goalkeeper is spawned at $(1.0, 0.0)$ and the defender is spawned at $(0.75, -0.3)$. The ball is spawned at $(0.7, -0.28)$. The \textit{run, pass and shoot} scenario in GRF is shown in Fig.~\ref{fig:app:run_pass_shoot}. There are five players and a soccer ball in the environment, with a scripted goalkeeper and two RL attackers on the left side and a scripted goalkeeper and one RL defender on the right side. The left goalkeeper is spawned at $(-1.0, 0.0)$ and the two attackers are spawned at $(0.7,0.0)$ and $(0.7, -0.3)$. The right goalkeeper is spawned at $(1.0, 0.0)$ and the defender is spawned at $(0.75, -0.1)$. The ball is spawned at $(0.7, -0.28)$. The \textit{3 vs 1 with keeper} scenario in GRF is shown in Fig.~\ref{fig:app:3vs1}. There are six players and a soccer ball in the environment, with a scripted goalkeeper and three RL attackers on the left side and a scripted goalkeeper and one RL defender on the right side. The left goalkeeper is spawned at $(-1.0, 0.0)$ and the three attackers are spawned at $(0.6,0.0)$, $(0.7, 0.2)$ and $(0.7, -0.2)$. The right goalkeeper is spawned at $(1.0, 0.0)$ and the defender is spawned at $(0.75, 0.0)$. The ball is spawned at $(0.6, 0.0)$. In all three environments, attackers have to learn how to dribble the ball, cooperate with teammates to pass the ball, and overcome the defender’s defense to score goals.

The environment is fully observable, and the state of each agent is a 115-dimensional vector, including the coordinates of left team players, the directions of left team players, the coordinates of right team players, the directions of right team players, the ball position, the ball direction, one hot encoding of ball ownership, one hot encoding of which player is active and one hot encoding of game mode. The detailed information is listed in Table~\ref{tab:app:grf_obs}. The action space is discrete with 19 actions: idle, left, top left, top, top right, right, bottom right, bottom, bottom left, long pass, high pass, short pass, shoot, start sprinting, reset current movement direction, stop sprinting, slide, start dribbling and stop dribbling. An episode lasts a maximum of 200 steps. The environment ends prematurely when one side scores, the possession of the ball changes, or the game is out of play. We use the standard scoring and checkpoint rewards provided by the football engine. Specifically, if the left team scores a goal in each step, all left players get a reward of +1, and the right player gets -1.
There are also ten concentric circles with the goal in the center, called checkpoint regions.
The left team obtains an additional checkpoint reward of +0.1 when they possess the ball, and first move into the next checkpoint region, and the right team gets -0.1.
Checkpoint rewards are only given once per episode. 

The inputs of the actor and critic networks first pass through a LayerNorm layer. The normalized states then pass through an MLP layer and then produce the value through a critic head and the action through an actor's head. All hyperparameters for training are listed in Table~\ref{tab:app:grf}.

\begin{figure}[t]
    \centering
    \includegraphics[width=0.7\linewidth]{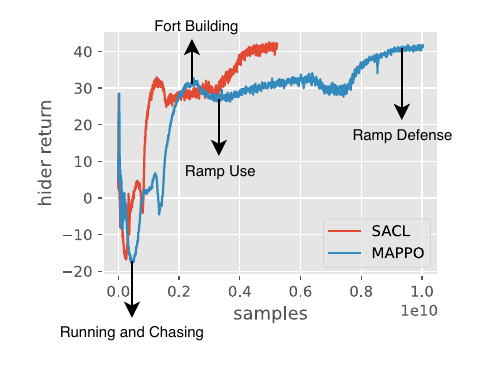}
    \vspace{-3mm}
    \caption{Checkpoints of four rounds of emergent strategies in HnS.}
    \label{fig:exp:hns_appendix}
\end{figure}

\textbf{Hide-and-seek Environment.}
\begin{figure*}[t]
\centering
\includegraphics[width=0.8\linewidth]{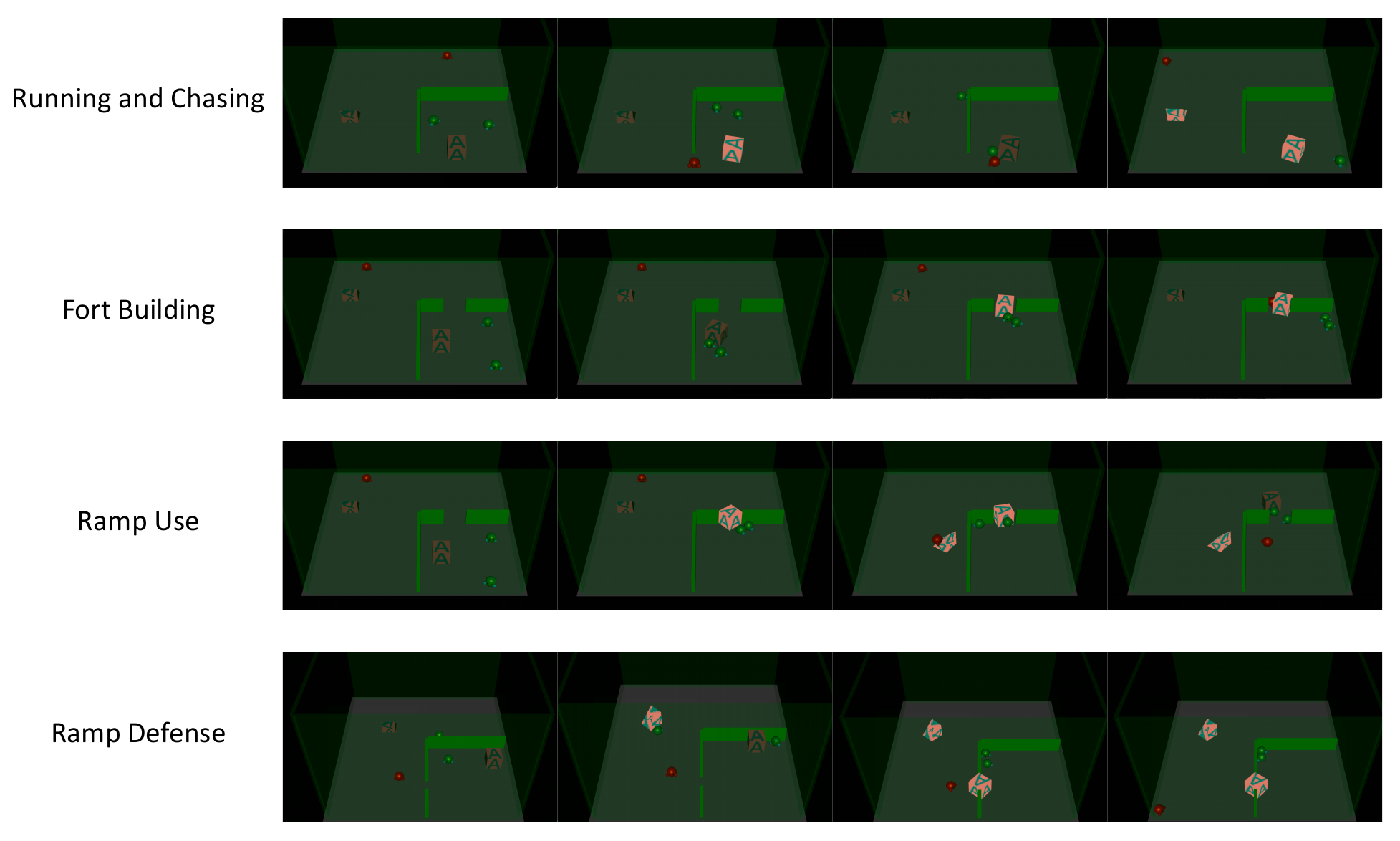}
\caption{Sample trajectory traces from each emergent stage in quadrant scenario of HnS.}
\label{fig:app:behaviors}
\end{figure*}

The quadrant scenario in the hide-and-seek environment is shown in Fig.~\ref{fig:app:quadrant}.
The environment is a square space with a square room with a door in the bottom-right corner.
There are two hiders (green), one seeker (red), one box, and one ramp.
At the beginning of each episode, the hiders, box, and ramp are uniformly spawned inside the room, and the seeker is uniformly spawned outside the room.

The environment is fully observable, and the state of each agent is a concatenation of the positions and velocities of all agents, the positions, velocities, and lock flags of the box and the ramp, and the current timestep.
The action space is discrete, and agents can choose to move in 4 directions: grab and lock/unlock.
Each episode lasts for $80$ steps and is divided into 2 phases: the preparation phase and the main phase.
In the preparation phase, the seeker is fixed, and only the hiders can act to prepare for the main phase.
No reward is given to any agent in the preparation phase.
In the main phase, all agents can act, and the seeker tries to find the hiders, and the hiders try to avoid being discovered.
When the seeker spots the hiders, the seeker gets a reward of $+1$ at this step, and the hiders get a reward of $-1$. 
Otherwise, the seeker gets a reward of $-1$, and the hiders get $+1$.

There are a total of 4 emergent stages in this game, as shown in Fig.~\ref{fig:app:behaviors}.
(1) \emph{Running and Chasing}: The hiders learn to run away from the seeker to avoid detection, while the seeker learns to chase the hiders. The seeker is the winner at this stage, and the average episode reward of hiders is about $-20$.
(2) \emph{Fort Building}: In the preparation phase, the hiders learn to use the box to block the door and lock it in place to build a fort so that the seeker cannot enter the room and see the hider. The hiders are the winners in this stage, and the average episode reward of hiders is about $30$.
(3) \emph{Ramp Use}: The seeker learns to move the ramp to the wall of the room and use it to get into the room. The average episode reward of hiders reduces to about $25$ but is still larger than $0$.
(4) \emph{Ramp Defense}: In the preparation phase, the hiders learn to move the ramp into the room or push it far away from the wall and lock it to prevent being used by the seeker. The seeker can no longer enter the room and find the hiders. The average episode reward of hiders is about $40$ at this stage.

We adopt the same network architecture as~\citep{baker2020emergent}.
The states are divided into different entities, including self, other agents, box, and ramp; then, each entity passes through fully connected layers to get its embedding.
The weights of the embedding layers are shared within entities of the same type.
Then, the embedding of each entity is concatenated with the self embedding and passed through a self-attention network.
Then, we average the output of the attention block and concatenate it with the self-embedding to get the final representation.
This representation is then passed through an MLP layer and an LSTM layer and then produces the value through a critic head and the action through an actor's head.
All hyperparameters of HnS are listed in Table~\ref{tab:app:hns}.

Besides zero-sum games, it is also possible to use {\name} in cooperative tasks.
We choose the Ramp Use stage in HnS to show that {\name} can produce comparable results to curriculum learning algorithms specialized for cooperative tasks~\citep{chen2021variational}.
In this task, there is one hider with a fixed policy, one seeker to train, one box, and one ramp.
We need to train a seeker policy to use the ramp to get into the quadrant
room for positive rewards. 
The environment is fully observable, and the state is the same as that in the quadrant scenario.
We use the same prior knowledge to define easy tasks as ~\citep{chen2021variational}, which prioritizes states where the ramp is right next to the wall, and agents are next to the ramp. 
All hyperparameters are listed in Table~\ref{tab:app:hns_coop}.

\begin{figure*}[t]
\centering
\subfigure[MPE.]
{
    \includegraphics[width=0.4\textwidth]{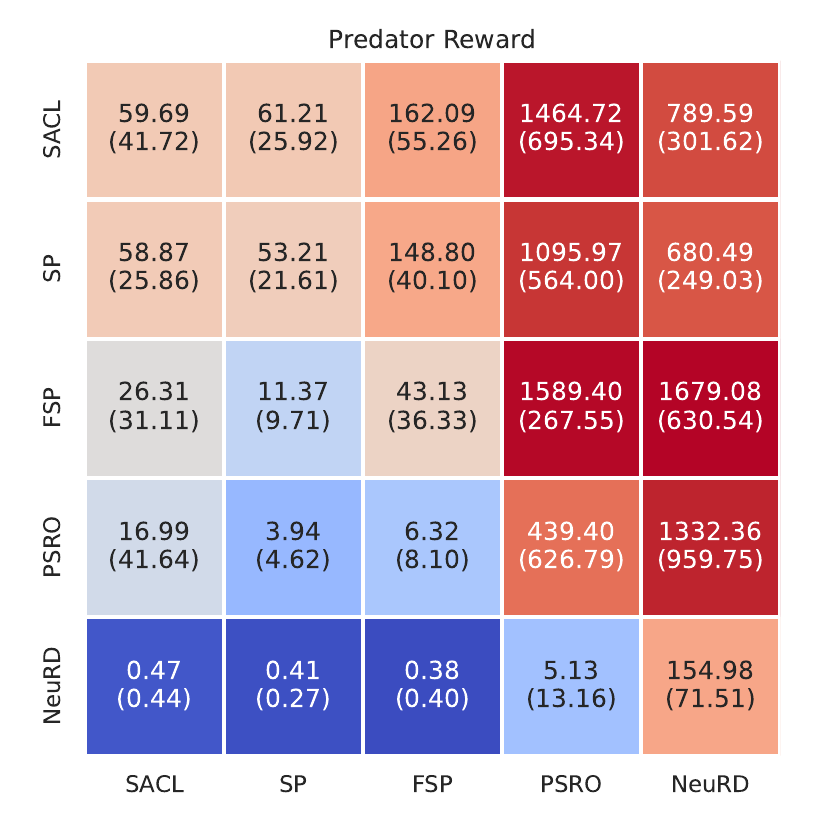}
    \label{fig:app:cross_easy}
}
\subfigure[MPE hard.]
{
    \includegraphics[width=0.4\textwidth]{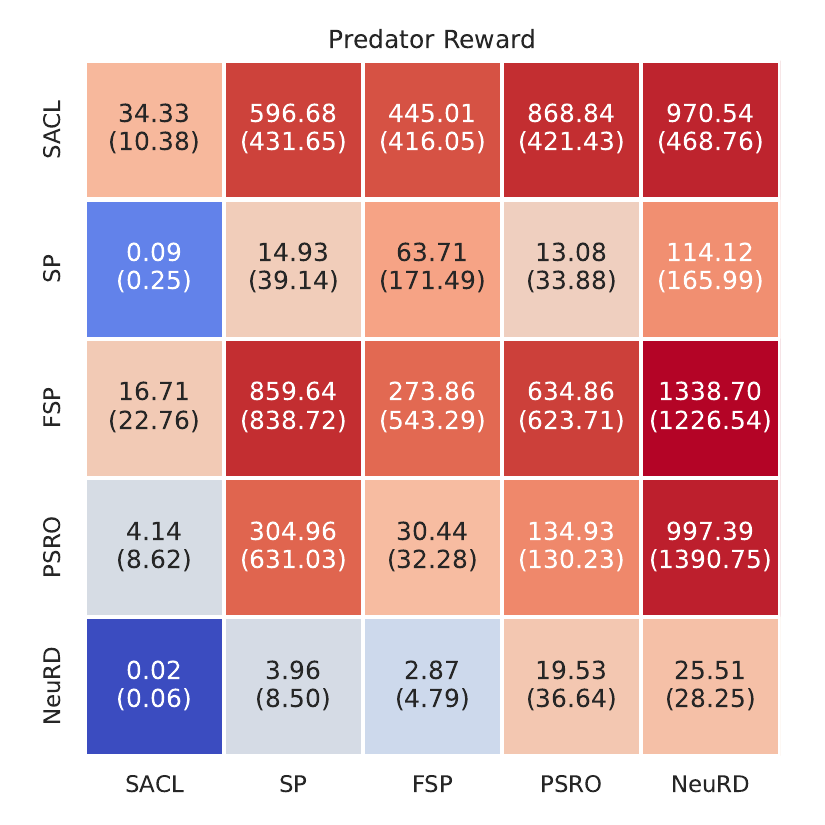}
    \label{fig:app:cross_hard}
}
\vspace{-3mm}
\caption{Cross-play results in MPE and MPE hard.}
\vspace{-2mm}
\end{figure*}

\begin{figure*}
\centering
\subfigure[5M: Center.]
{
    \includegraphics[width=0.25\textwidth]{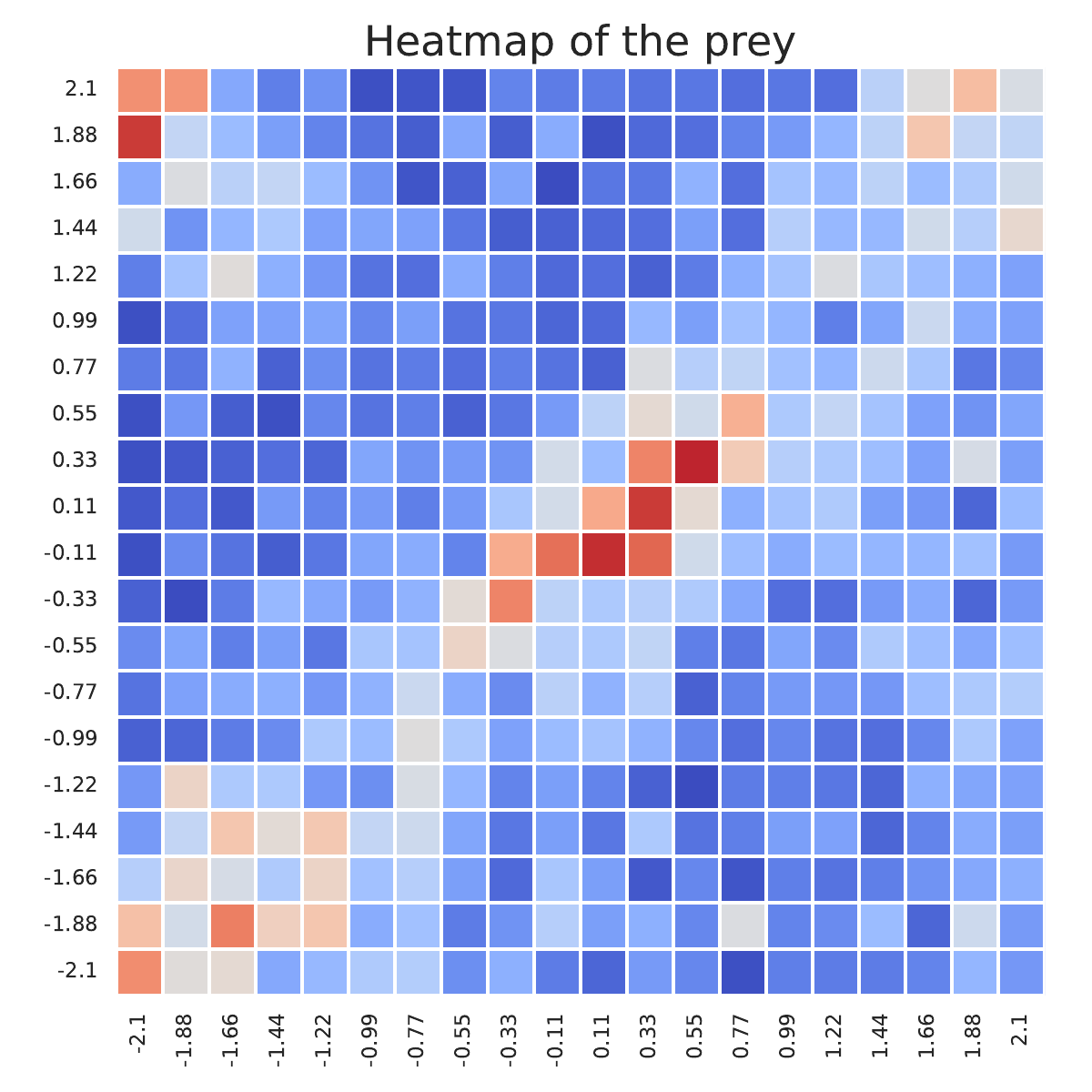}
}
\subfigure[8M: Edges.]
{
    \includegraphics[width=0.25\textwidth]{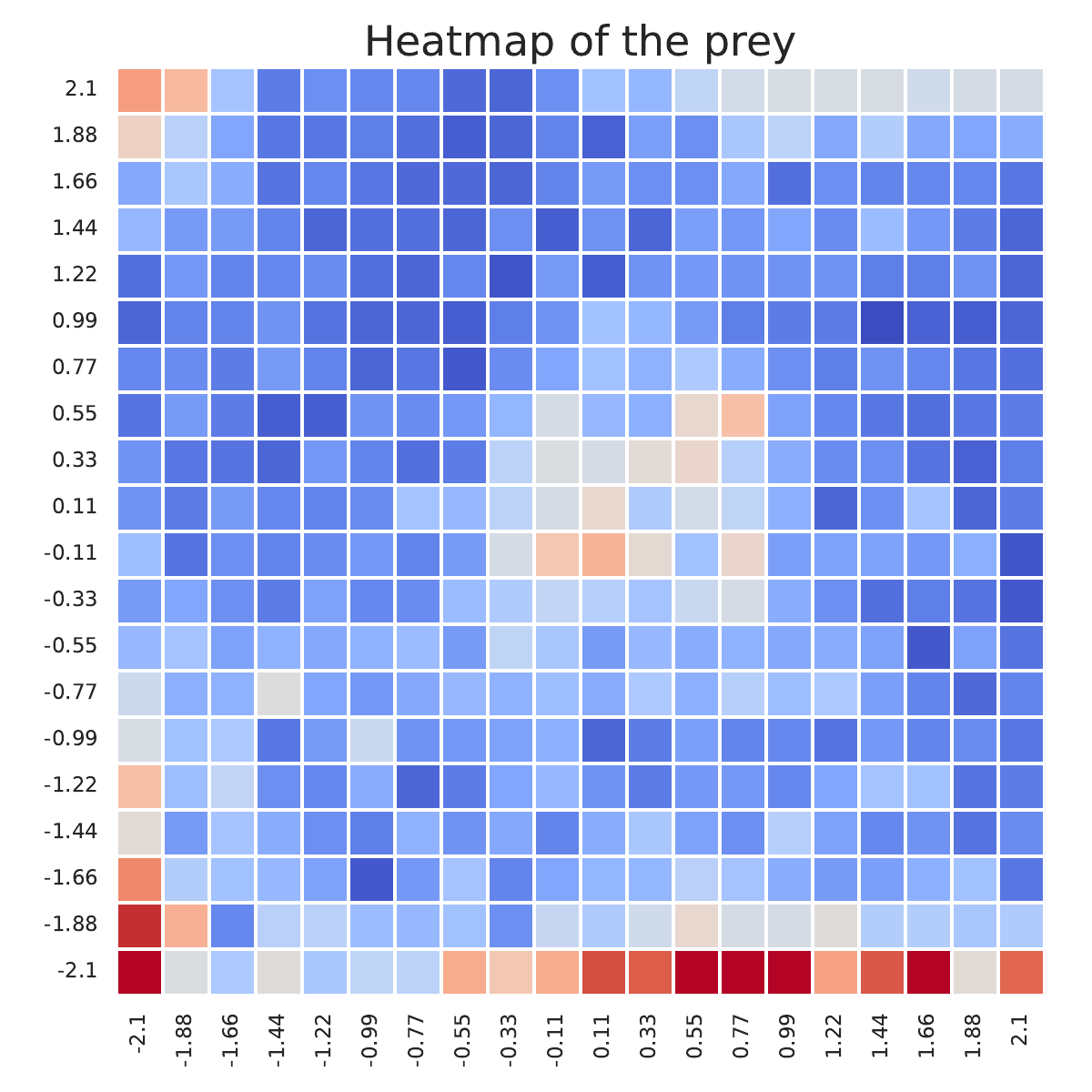}
}
\subfigure[15M: Corners.]
{
    \includegraphics[width=0.25\textwidth]{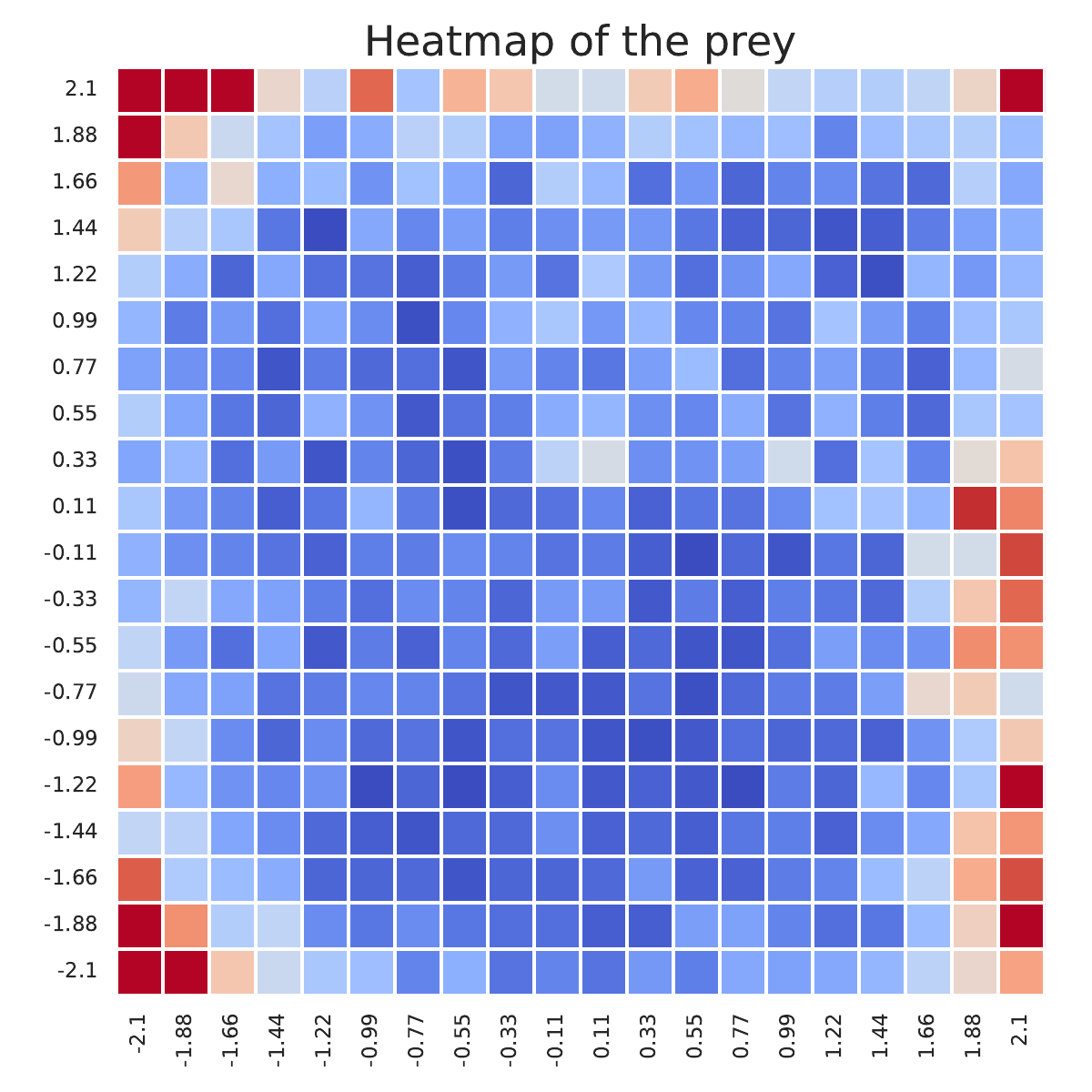}
}
\vspace{-3mm}
\caption{Visualization of the prey's initial position heatmap generated by {\name} at different training timesteps.}
\label{fig:app:sacl_hm}
\end{figure*}

\begin{figure*}[t]
\centering
\subfigure[Buffer size.]
{
    \includegraphics[width=0.23\linewidth]{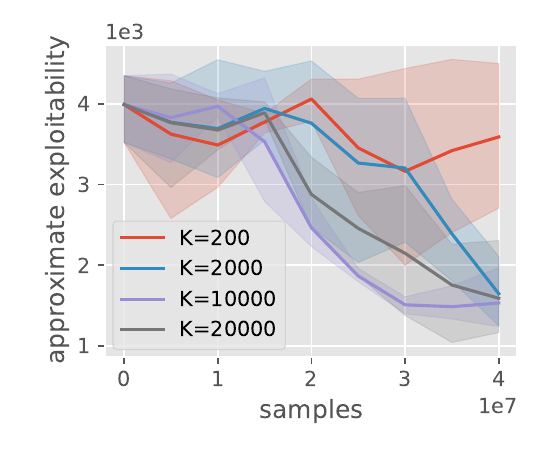}
    \label{fig:app:buffer_size}
}
\subfigure[Sample probability.]
{
    \includegraphics[width=0.23\linewidth]{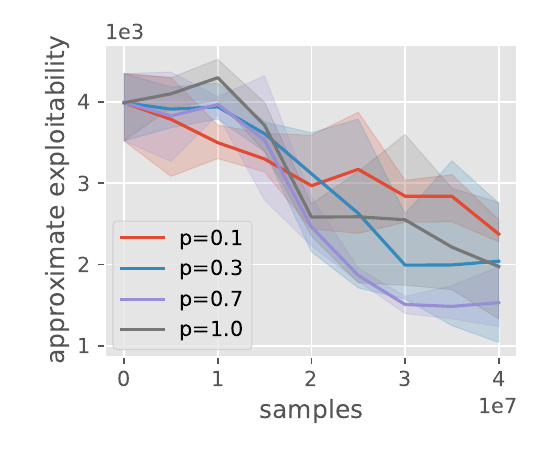}
    \label{fig:app:sample_p}
}
\subfigure[Ensemble size.]
{
    \includegraphics[width=0.23\linewidth]{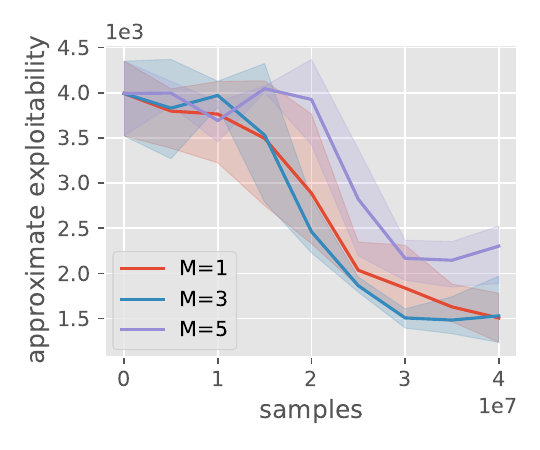}
    \label{fig:app:ensemble_size}
}
\subfigure[Weight of value difference.]
{
    \includegraphics[width=0.23\linewidth]{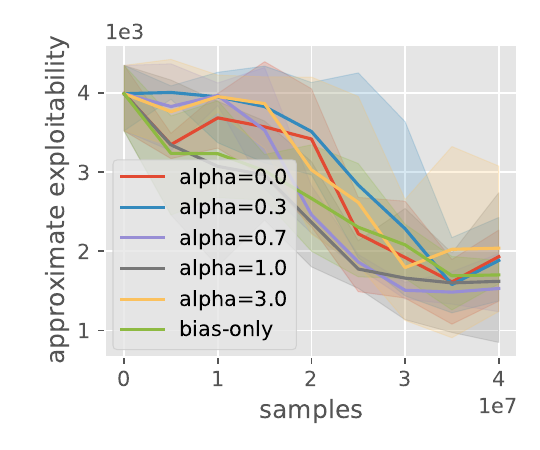}
    \label{fig:app:alpha}
}
\vspace{-3mm}
\caption{Ablation studies in MPE hard.}
\label{fig:app:ablation1}
\end{figure*}

\begin{figure*}[t]
\centering
\includegraphics[width=0.7\textwidth]{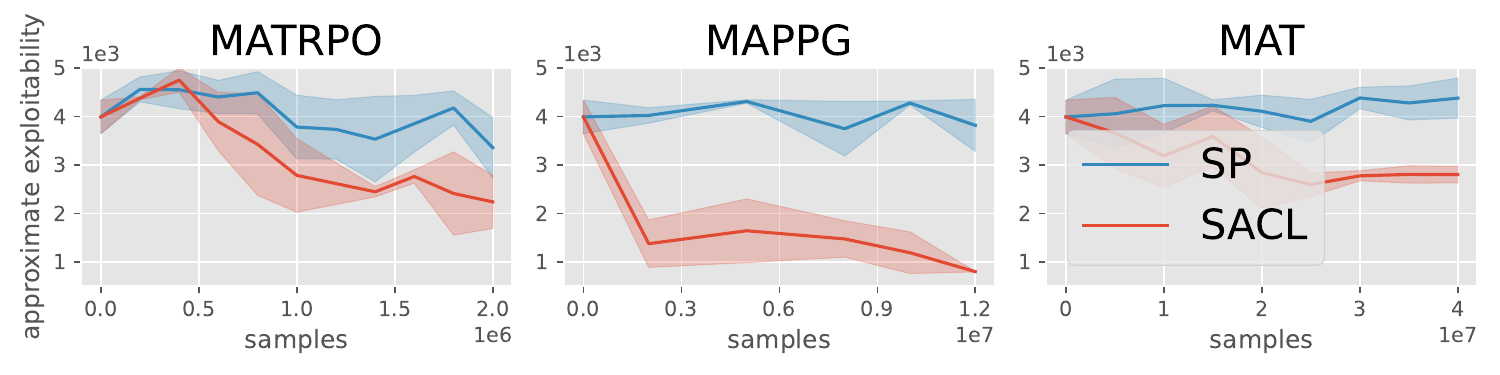}
\vspace{-2mm}
\caption{SACL outperforms self-play (SP) in all three MARL algorithms.}
\label{fig:app:marl}
\end{figure*}

\subsection{Evaluation Details}\label{sec:app:exploitability}

\textbf{Exploitability.}

In zero-sum games, because the performance of one player's policy depends on the other player's policy, the return curve throughout training is no longer a suitable evaluation method.
One way to compare the performance of different policies is to use cross-play, which uses a tournament-style match between any two policies and records the results in a payoff matrix.
However, due to the non-transitivity of many zero-sum games~\citep {balduzzi2019open}, winning other policies does not necessarily mean being close to NE policies. Hence, a better way to evaluate the performance of policies is to use exploitability.
Given a pair of policies $(\pi_1, \pi_2)$, the exploitability is defined as
\begin{small}
{
\begin{align}
    \mathrm{exploitability}(\pi_1, \pi_2) = \sum_{i=1}^2 \max_{\pi_i'} \mathbb{E}\left[V_i^{(\pi_i', \pi_{-i})}(s^0)\right].
\end{align}
}
\end{small}
Exploitability can be roughly interpreted as the ``distance'' to the joint NE policy.
In complex environments like the ones we use, the exact exploitability cannot be calculated because we cannot traverse the policy space to find $\pi_i'$ that maximizes the value. We compute the approximate exploitability by training an approximate best response $\tilde{\pi}_i'$ of the fixed policy $\pi_i$ using MAPPO. A BR is trained for 200M samples in MPE and 400M in GRF. The lower the exploitability, the better the algorithm. We use the checkpoints of an algorithm's policy trained with different numbers of environment steps to estimate the exploitability. Specifically, we run {\name} in MPE and save a policy checkpoint when the agent has consumed 0M, 5M, 10M, 15M, ..., and 40M environment samples. Then, we keep each checkpoint fixed and train an adversarial policy to be the best response of the fixed policy to estimate the exploitability. Then, we get an exploitability curve of {\name} over samples. Finally, we repeat this procedure for two more seeds, average the results, and plot the std error. For a single algorithm, we trained $9 \times 3 = 27$ (checkpoints $\times$ seeds) best-response policies to plot one curve in the exploitability graph.

\textbf{Cross-play.}
We evaluate {\name} and other baselines by cross-play, which uses a head-to-head match between any two policies and records the results in a payoff matrix. In MPE, the element of the payoff matrix represents the episodic reward of the predators, and in GRF, it represents the win rate of the red team. More specifically, we train three seeds for each algorithm and match three models of one algorithm against the three models of the opponent algorithm, i.e., we get $3\times3=9$ competitions between any two algorithms and report the average results and the std error. For example, in MPE, we use three different predators of {\name} to compete with three different preys of SP to get the episode predator reward. We can evaluate the performance of the predator using the elements of a row and evaluate the performance of the prey using the elements of a column. We use the first row to represent the predator of {\name}, then a larger value in this row than other rows means that the predator of {\name} is better than other algorithms. We use the first column to represent the prey of {\name}, then a smaller value in this column than in other columns means that the prey of {\name} is better than other algorithms.



\begin{figure*}[t]
\centering
\subfigure[FPS.]
{
    \includegraphics[width=0.25\textwidth]{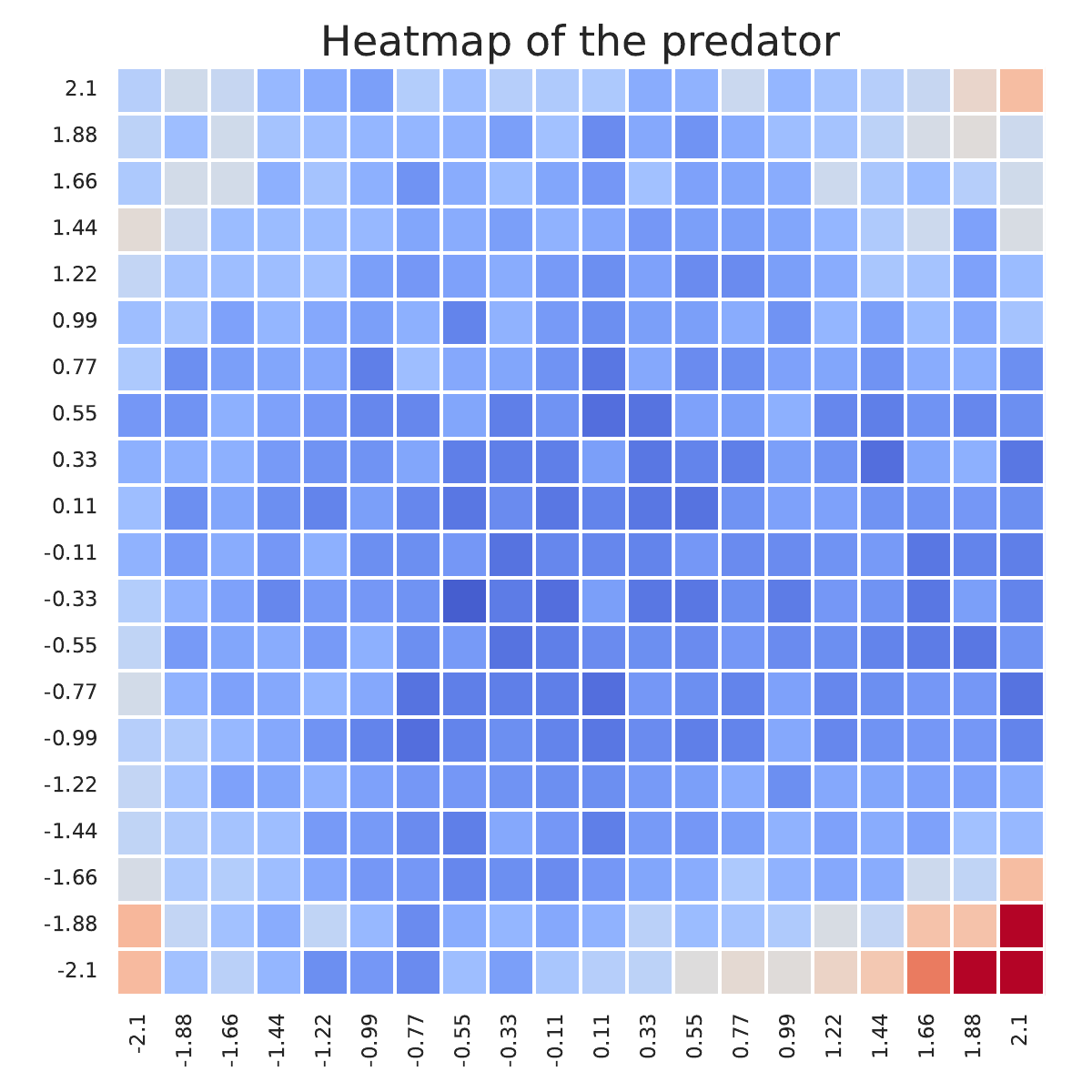}
    \label{fig:app:fps}
}
\subfigure[Greedy.]
{
    \includegraphics[width=0.25\textwidth]{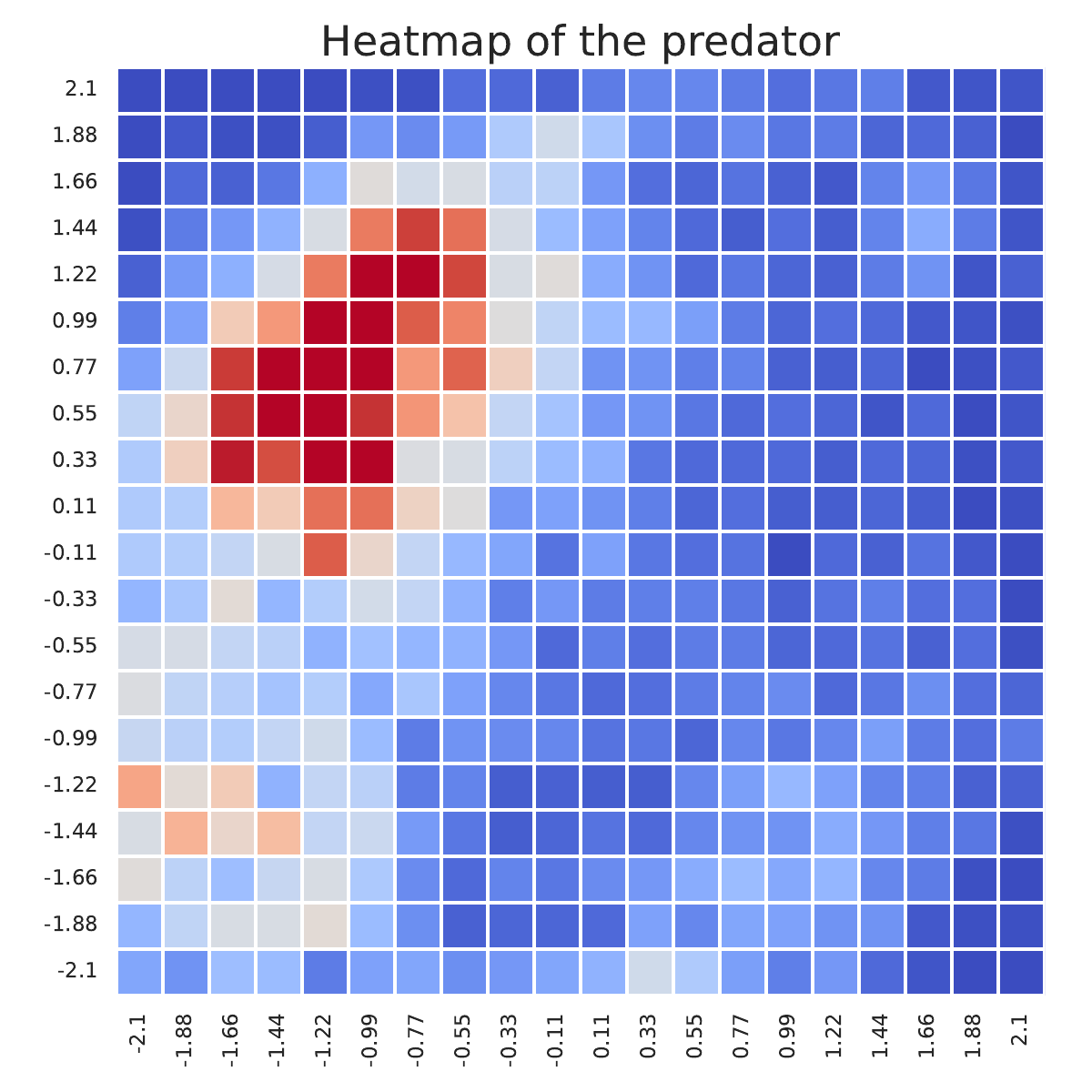}
    \label{fig:app:greedy}
}
\subfigure[Random.]
{
    \includegraphics[width=0.25\textwidth]{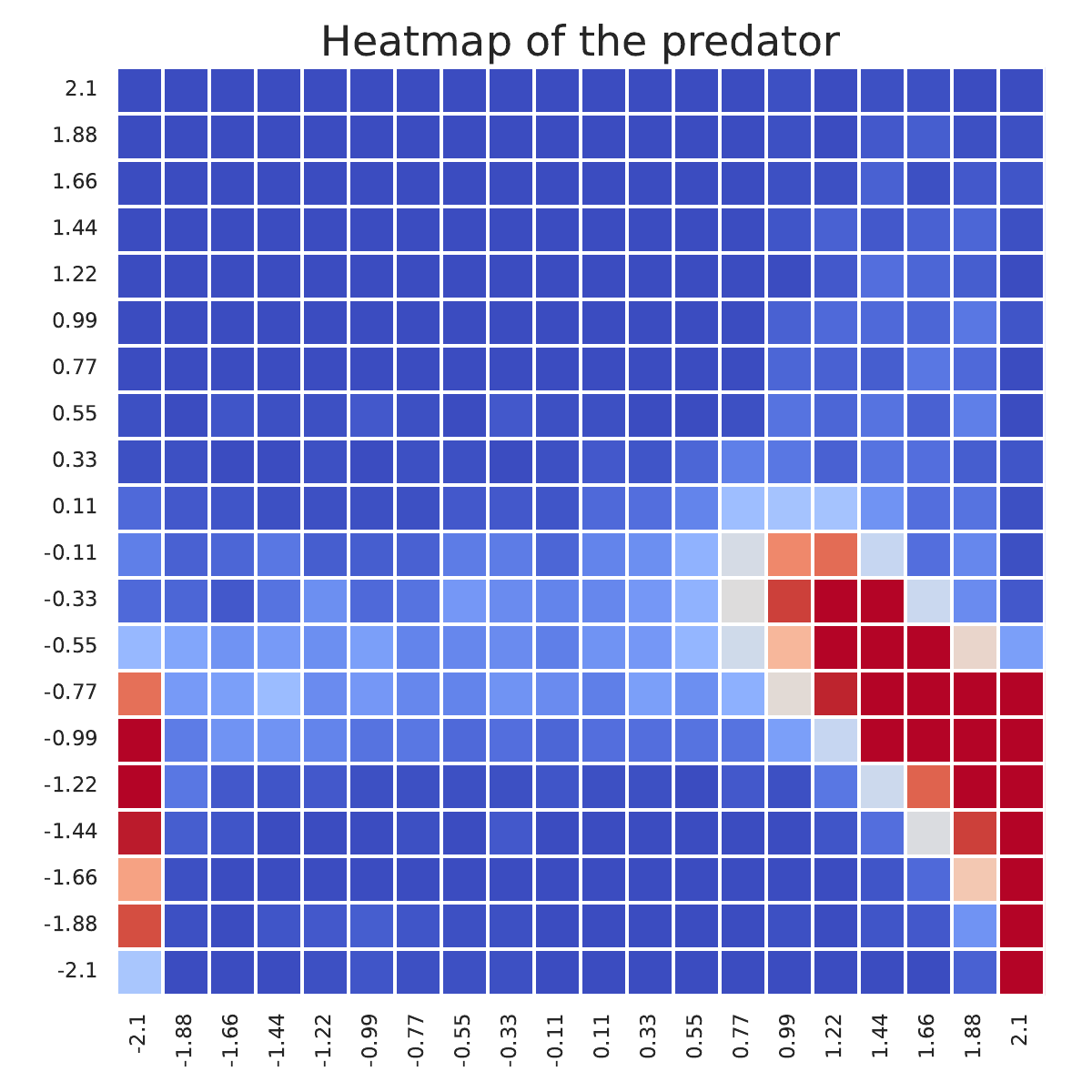}
    \label{fig:app:random}
}
\subfigure[FPS.]
{
    \includegraphics[width=0.25\textwidth]{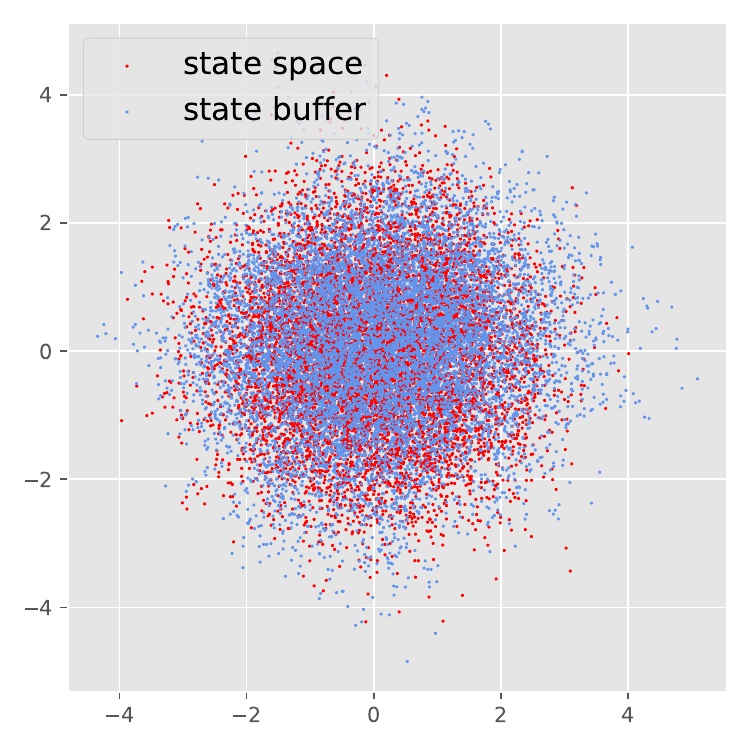}
    \label{fig:app:fps_pca}
}
\subfigure[Greedy.]
{
    \includegraphics[width=0.25\textwidth]{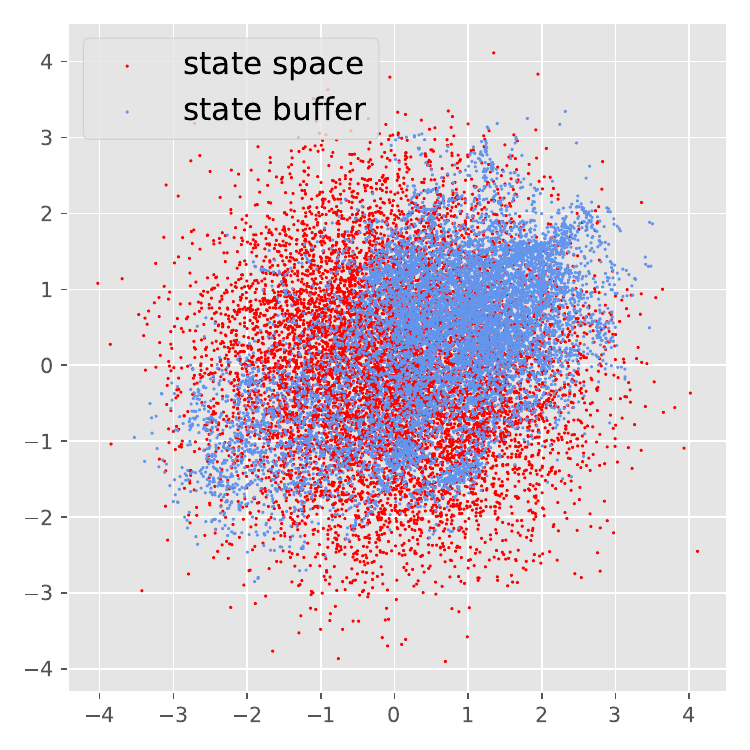}
    \label{fig:app:greedy_pca}
}
\subfigure[Random.]
{
    \includegraphics[width=0.25\textwidth]{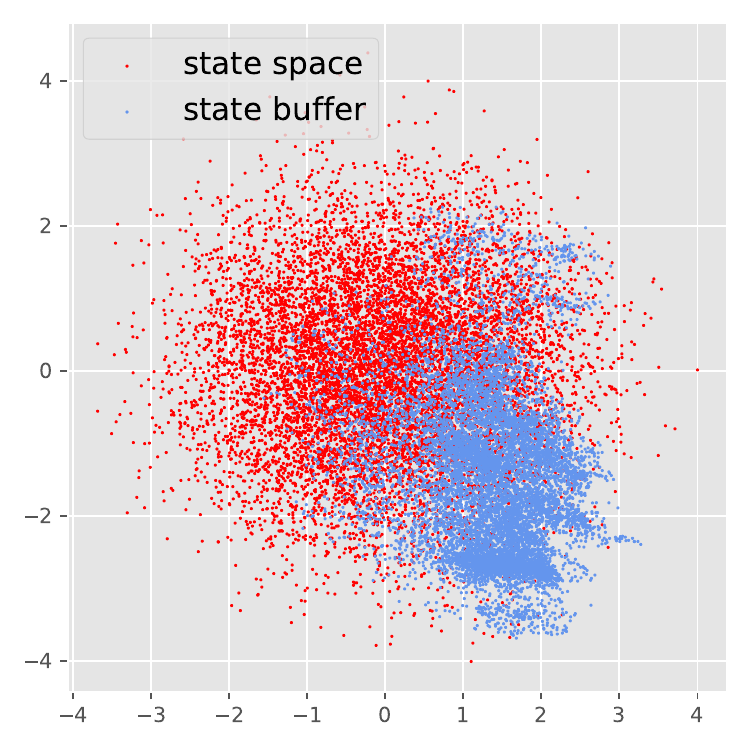}
    \label{fig:app:random_pca}
}
\vspace{-3mm}
\caption{Visualization of the state buffer and projection of tasks in the state buffer to 2-dimension by principal component analysis generated by three update methods.}
\label{fig:app:update_method}
\vspace{-2mm}
\end{figure*}

\begin{figure*}[t]
\centering
    \subfigure[Pass and shoot.]
{
    \includegraphics[width=0.25\textwidth]{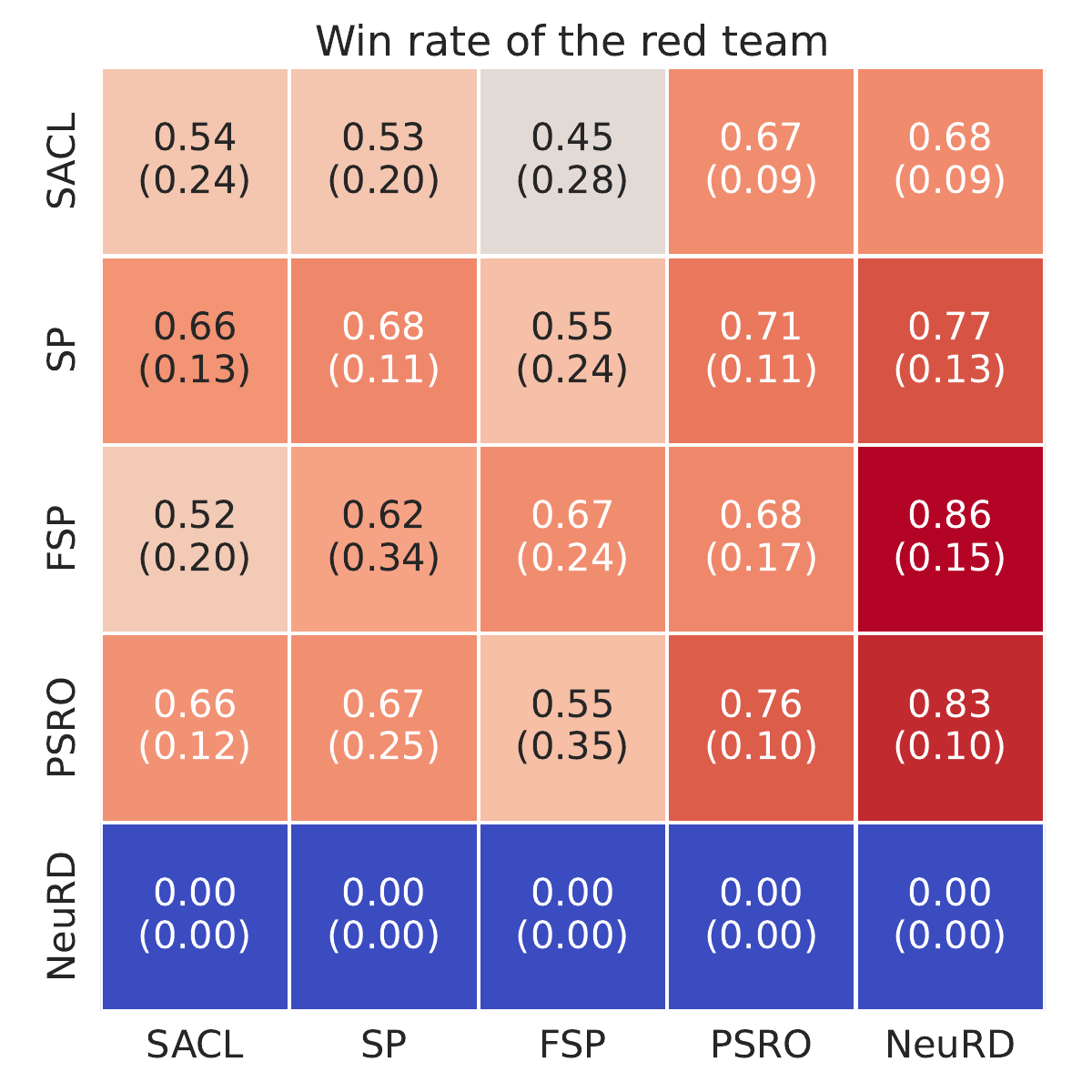}
    \label{fig:app:cross_ps}
}
\subfigure[Run, pass and shoot.]
{
    \includegraphics[width=0.25\textwidth]{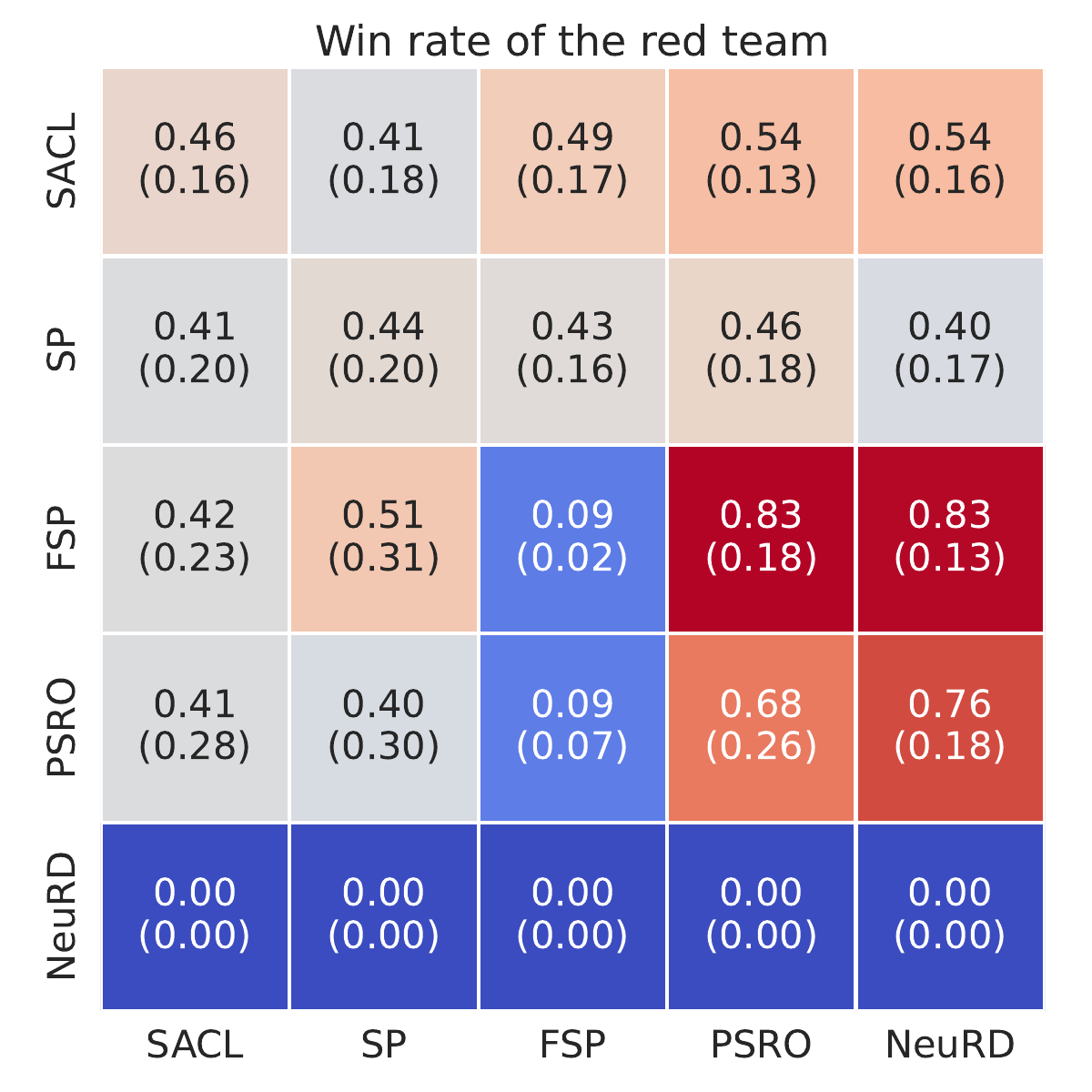}
    \label{fig:app:cross_rps}
}
\subfigure[3 vs 1 with keeper.]
{
    \includegraphics[width=0.25\textwidth]{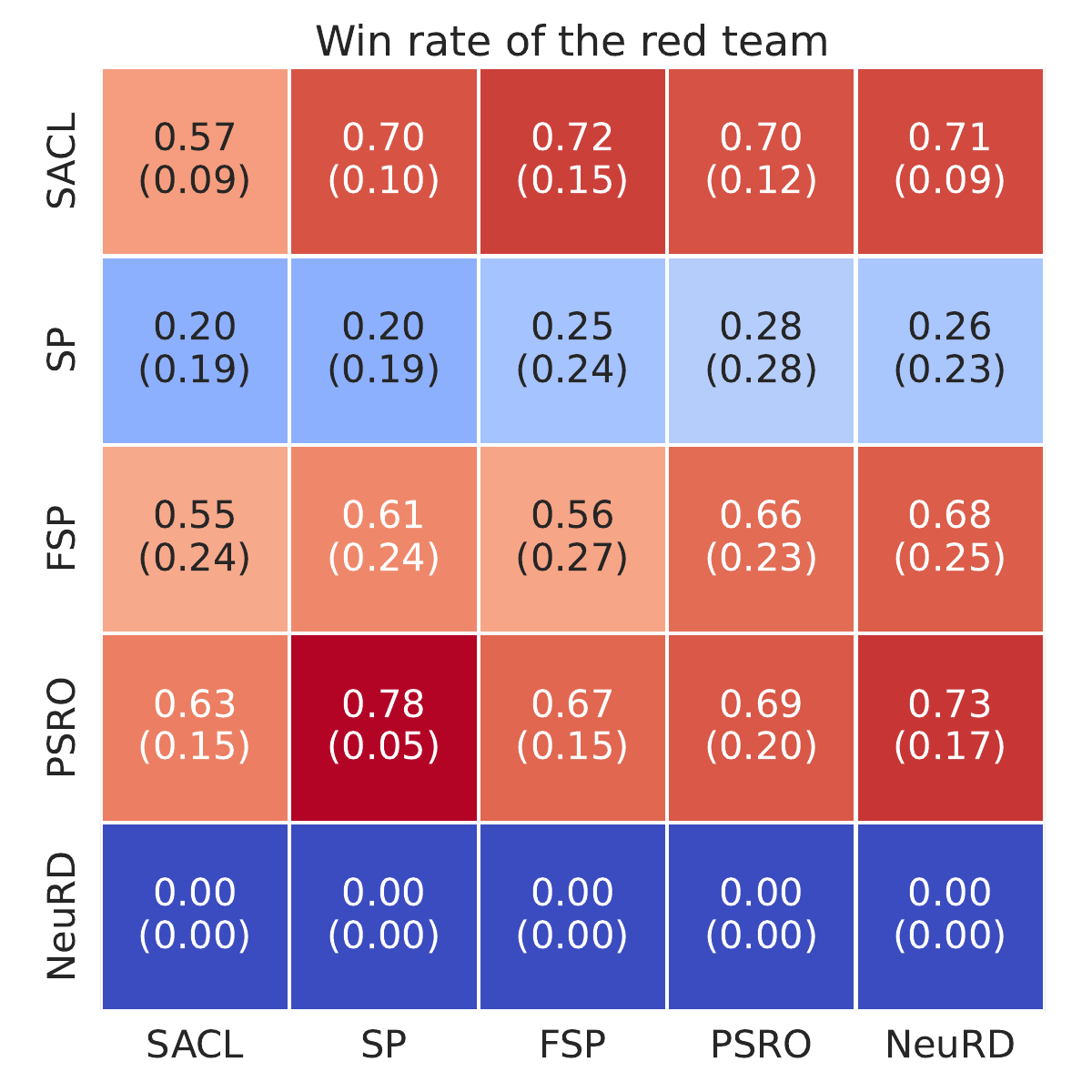}
    \label{fig:app:cross_3vs1}
}
\caption{Cross-play results in GRF.}
\label{fig:app:cross_grf}
\end{figure*}

\textbf{Four rounds of emergent strategies in HnS.} As shown in Fig.~\ref{fig:exp:hns_appendix}, we use three inflection points to evaluate the sample required to produce the first three stages.
More specifically, the \emph{Running and Chasing} phase ends when the hider's reward decreases to the lowest value of about $-20$.
When the hider's reward begins to increase, the \emph{Fort-Building} phase begins and continues until the hider's reward reaches a local maximum of about $30$.
Then, the agents move to the \emph{Ramp-Defense} phase until the hider's reward reaches a local minimum and begins the final \emph{Ramp-Use} stage.
We choose the point when the hider's episode reward reaches $40$ as the end of the final stage.

\section{Additional Experiment Results}\label{sec:app:exp}

\subsection{Multi-Agent Particle Environment}\label{sec:app:mpe}
\textbf{Cross-play.}
The results of cross-play at $40M$ in MPE and MPE hard are shown in Fig.~\ref{fig:app:cross_easy} and Fig.~\ref{fig:app:cross_hard}. 
In MPE and MPE hard, the predator and prey of {\name} beat all baselines. 
For example, let x be the row x and y represent the column y of the payoff matrix. We compare the predator of  {\name} with FSP using rows 1 and 3 and find that the elements of row 1 are larger than the elements of row 3, i.e., the predator of {\name} is better than FSP. The elements of column 1 are smaller than the elements of column 3, which means the prey of {\name} is better than FSP. The prey trained by {\name} swerves to avoid the predators when the predators surround him, and the predators learn to capture the prey in the two environments. SP is comparable with {\name} in MPE, but in the hard setting, SP does not converge to the NE policy due to the large initial distance between predator and prey. 
FSP also performs worse than {\name} in the hard setting for the same reason as SP. For PSRO, it is even difficult to obtain the best response corresponding to the prey of random policy in MPE hard because the initial distance between prey and predator is too far. NeuRD performs poorly in both MPE and MPE hard because NeuRD's update rules cause drastic policy changes and erratic convergence.

\textbf{Visualization of subgame curriculum.}
To show the effectiveness of {\name}, we visualize the change of the prey’s initial position heatmap produced by {\name} in the MPE hard in Fig.~\ref{fig:app:sacl_hm} and find that it starts from the center and moves to the edges and corners which means that we train from easier subgames and gradually move to harder ones. 



\begin{figure}[t]
    \centering
    \includegraphics[width=0.7\linewidth]{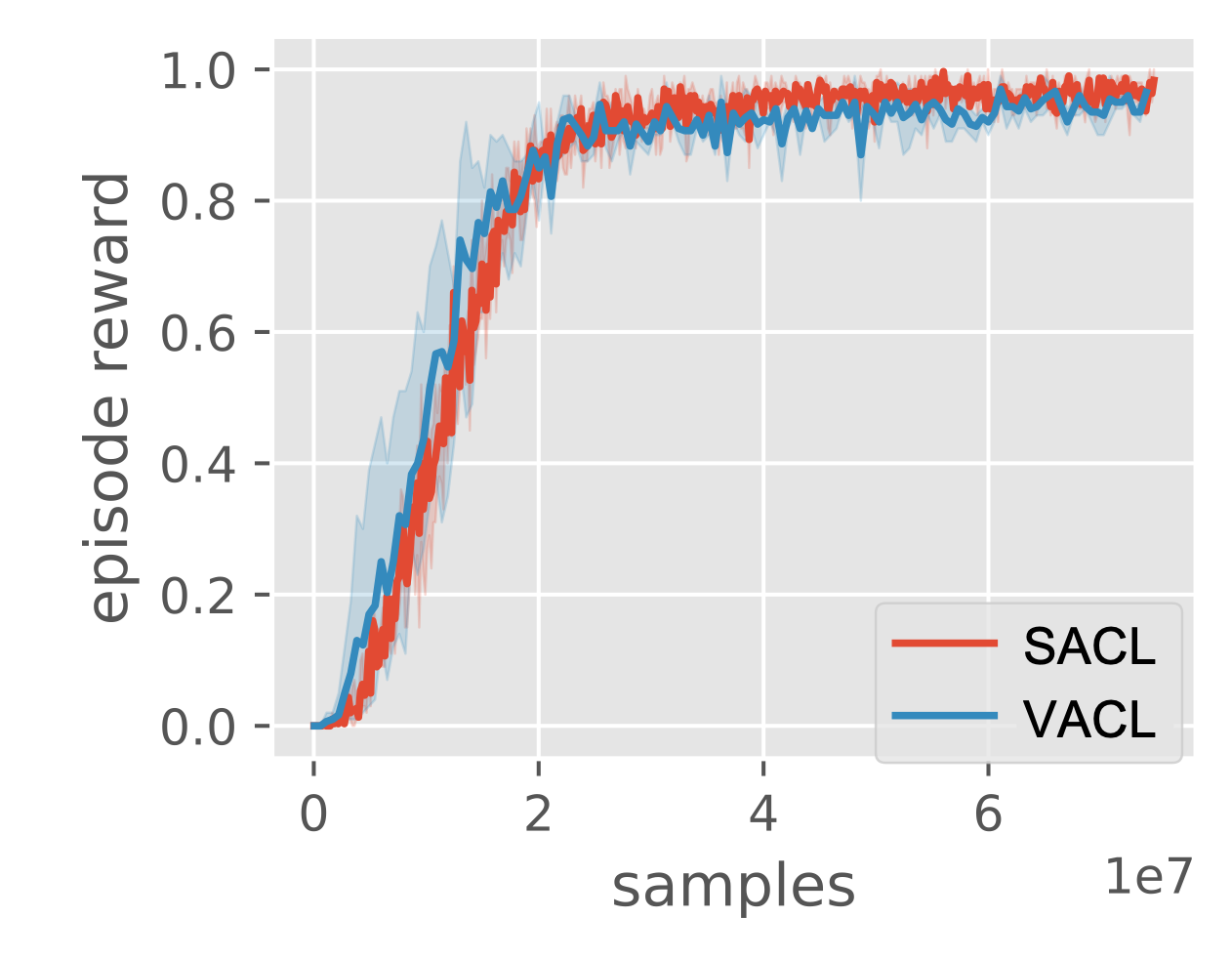}
    \caption{Seeker's episode reward in the goal-reaching \emph{Ramp-Use} task. {\name} is comparable to a strong baseline VACL, which is specialized for goal-oriented problems.}
    \label{fig:app:coop_game}
\end{figure}

\textbf{Buffer size.} 
As shown in Fig.~\ref{fig:app:buffer_size}, the buffer capacity $K$ must be large enough. When the buffer is too small, the states in the buffer cannot approximate the state space. When the buffer is too large, FPS consumes much time. So we finally choose $K=10000$.

\textbf{Subgame sample probability.}
As shown in Fig.~\ref{fig:app:sample_p}, we need more samples from the subgame buffer than uniform sampling in the training batch, and uniform sampling from the state space ensures global exploration. When $p$ is too small, {\name} degenerates into SP, resulting in poor performance. When $p=1$, the lack of global exploration also leads to poor performance. Finally, we choose $p=0.7$.

\textbf{Ensemble size.}
As shown in Fig.~\ref{fig:app:ensemble_size}, we can train an ensemble of value functions for each player to improve the estimation. Excessive ensemble size requires much memory and training time. So we finally choose $M=3$.

\textbf{Weight of value difference.}
As shown in Fig.~\ref{fig:app:alpha}, our algorithm is insensitive to the weight of the value difference $\alpha$. Empirically, we prefer $\alpha$ less than 1. We finally choose $\alpha=0.7$ in MPE, MPE hard, and GRF, $\alpha=1.0$ in Hns.

\textbf{Different MARL backbones.}
We also conduct experiments in MPE hard with three more MARL algorithms including MAT~\citep{wen2022multi}, MATRPO, and MAPPG which are extensions of Trust Region Policy Optimization (TRPO)~\citep{schulman2015trust} and Phasic Policy Gradient (PPG)~\citep{cobbe2021phasic} based on the centralized training and decentralized execution paradigm. The results in Fig.~\ref{fig:app:marl} show that SACL accelerates the learning process of all three algorithms.

\textbf{Buffer update method.}
We further visualize the state distribution in the buffer generated by different update methods in Fig.~\ref{fig:app:update_method}. Fig.~\ref{fig:app:update_method}(a-c) shows the heatmaps of the predators' position. Fig.~\ref{fig:app:update_method}(d-f) run PCA on the full state space and show the projection of the states in the buffer to the two-dimensional space. The results show that if we randomly select states or greedily select states with high weights, the states in the buffer can become very concentrated and can't approximate the whole state space.

\subsection{Google Research Football}\label{sec:app:grf}
The results of cross-play in \textit{pass and shoot}, \textit{run, pass and shoot}, and \textit{3 vs 1 with keeper} are shown in Fig.~\ref{fig:app:cross_grf}. 
In the three scenarios, {\name} is comparable to FSP and PSRO and better than SP and NeuRD. For example, let x be the row x, and y be the column y of the payoff matrix. In \textit{3 vs 1 with keeper}, the elements of row 1 are larger than the elements of row 2, which means the attackers of {\name} are better than SP. The elements of column 1 are comparable with the elements of column 2, i.e., the prey of {\name} is comparable with SP. It is worth mentioning that in \textit{run, pass and shoot}, FSP and PSRO attackers have a higher win rate than {\name} against PSRO and NeuRD defenders. This is because PSRO and NeuRD defenders have a bad defensive policy, and FSP and PSRO attackers have their counter policy. However, Table 1 in the main text shows that the exploitability of {\name} is lower than others. This is because zero-sum games are non-transitive. For example, in rock-paper-scissors, it doesn't mean that rock is better than paper just because rock beats scissors and scissors beat paper. Thus, a high return against a single policy does not mean that it is close to the NE policy, and the comparable result in cross-play does not contradict the exploitability result. In general, exploitability is a better measure of policy performance and is used in many papers.

We also visualize the behavior of different methods to show that {\name} learns more complex policies than others and is closer to the NE policies. For example, in \textit{3 vs 1 with keeper}, the NE policy is that the left players shoot from the top, middle, and bottom with equal probability. {\name} learns to shoot from the top and the middle, while FSP and PSRO only shoot from the bottom.

\subsection{Hide-and-seek}
Although {\name} is derived for zero-sum games, it is also applicable to more general settings such as goal-conditioned problems. We consider the \emph{Ramp-Use} task proposed in VACL~\citep{chen2021variational}, where the seeker aims to get into the lower-right quadrant (with no door opening), which is only possible by using a ramp. 
We adopt the same prior knowledge of ``easy tasks'' used in VACL to initialize the state buffer $\mathcal{M}$ and achieve comparable sample efficiency with VACL, one of the strongest ACL algorithms for goal-conditioned RL. The result is shown in Figure~\ref{fig:app:coop_game}.

\section{Future Directions to Extend {\name}}\label{app:future}

\subsection{For Partially-Observable Markov Games}
{\name} can be directly used in fully observable Markov games where the states contain all information about the game and are observable to all agents. For partially observable Markov games, though some of the information is hidden from the agents, the states still contain all information of the game, and it is also possible to run {\name} in these games. An important part is to deal with the distribution of hidden information. A way to do that is to replace states in prioritized sampling with infosets, i.e., sets of states that are indistinguishable from agents, and maintain the distribution of states within each infoset. In each episode, we first use prioritized sampling to select an infoset and then sample a state from the infoset to generate the subgame. In this way, we keep the distribution of hidden information and also build a subgame curriculum to accelerate training.

\subsection{For General-Sum Games}
{\name} consists of three components: the subgame curriculum learning framework, the sampling metric, and the particle-based sampler. The framework and the sampler can be applied to general-sum games because they don't require the zero-sum property. The only part to change is the metric.
Unfortunately, to the best of our knowledge, there is no clear metric to measure the subgames' learning progress in general-sum games. A possible way is to start from Eq.~\ref{eq:oracle} to derive another metric, which is a possible direction for future work.

\end{document}